\documentclass[11pt, a4paper]{article}

\usepackage[a4paper]{geometry}
\usepackage[english]{babel}
\usepackage{bm}

\usepackage{diagbox}

\hyphenation{Lead-ing-ones}
\hyphenation{parameter}
\usepackage[utf8]{inputenc}
\usepackage{xspace}
\usepackage{amsmath,amsthm,amssymb,mathtools}
\usepackage{url}
\usepackage{dsfont,bbm}
\usepackage{natbib}
\usepackage{algorithm}
\usepackage[noend]{algpseudocode}
\usepackage{xcolor}
\usepackage{tikz}
\usepackage{graphicx}
\usepackage{subcaption}
\usepackage{multirow}

\newcounter{phase}[algorithm]
\newlength{\phaserulewidth}
\newcommand{\setphaserulewidth}{\setlength{\phaserulewidth}}
\newcommand{\phase}[1]{
	\vspace{-1.25ex}
	\Statex\leavevmode\llap{\rule{\dimexpr\labelwidth+\labelsep}{\phaserulewidth}}\rule{\linewidth}{\phaserulewidth}
	\Statex\strut\refstepcounter{phase}\textit{State~\thephase~--~#1}
	\vspace{-1.25ex}\Statex\leavevmode\llap{\rule{\dimexpr\labelwidth+\labelsep}{\phaserulewidth}}\rule{\linewidth}{\phaserulewidth}}
\makeatother

\setphaserulewidth{.5pt}

\renewcommand{\labelenumi}{(\alph{enumi})}
\renewcommand\theenumi\labelenumi
\newcommand{\filtt}{\mathcal{F}_t}

\newtheorem{lemma}{Lemma}
\newtheorem{theorem}{Theorem}

\newtheorem{corollary}{Corollary}

\clubpenalty=10000
\widowpenalty=10000

\renewenvironment{proof}
{\begin{trivlist}\item\textbf{Proof.}}
{\hspace*{\fill}$\Box$\end{trivlist}}

\newenvironment{proofwithoutbox}
{\begin{trivlist}\item\textbf{Proof.}}
{\end{trivlist}}

\newcommand{\ooea}{(1+1)~EA\xspace}
\newcommand{\sdooea}{SD-(1+1)~EA\xspace}
\newcommand{\oneoneea}{\ooea}

\newcommand{\oplea}{(1+$\lambda$)~EA\xspace}
\newcommand{\onelambdaea}{\oplea}
\newcommand{\saonelambdaea}{SASD-(1+$\lambda$)~EA\xspace}

\newcommand{\oofea}{(1+1)~FEA$_\beta$\xspace}
\newcommand{\om}{\textsc{OneMax}\xspace}
\newcommand{\onemax}{\om}
\newcommand{\jump}{\textsc{Jump}\xspace}
\newcommand{\lo}{\textsc{Lead\-ing\-Ones}\xspace}

\newcommand{\leadingones}{\lo}
\newcommand{\trap}{\textsc{Trap}\xspace}
\newcommand{\R}{\ensuremath{\mathbb{R}}}
\newcommand{\N}{\ensuremath{\mathbb{N}}}

\newcommand{\indic}[1]{\mathds{1}_{#1}}
\newcommand{\card}[1]{\lvert #1\rvert}
\newcommand{\ones}[1]{\lvert #1\rvert_1}

\newcommand{\pre}{\textsc{pre}}
\newcommand{\suff}{\textsc{suff}}
\newcommand{\needhighmut}{\textsc{NeedHighMut}\xspace}
\DeclareMathOperator{\Prob}{Pr}

\newcommand{\prob}[1]{\Prob\left(#1\right)}
\newcommand{\expect}[1]{\mathrm{E}\left(#1\right)}

\DeclareMathOperator{\gap}{gap}
\DeclareMathOperator{\im}{Im}

\newcommand{\ie}{i.\,e.\xspace}
\newcommand{\wlo}{w.\,l.\,o.\,g.\xspace}
\newcommand{\eg}{e.\,g.\xspace}

\title{Self-Adjusting Evolutionary Algorithms for Multimodal Optimization}

\author{
  Amirhossein Rajabi\\
  Technical University of Denmark \\
	Kgs. Lyngby \\
	Denmark \\
amraj@dtu.dk \\
  \and
    Carsten Witt\\
  Technical University of Denmark \\
	Kgs. Lyngby \\
	Denmark \\
cawi@dtu.dk \\
}

\begin{document}

\maketitle

\begin{abstract}
Recent theoretical research has shown that self-adjusting and self-adaptive mechanisms 
can provably outperform static settings in evolutionary algorithms for binary search spaces. However, 
the vast majority of these studies focuses on unimodal functions which do not require the algorithm 
to flip several bits simultaneously to make progress. In fact, existing self-adjusting algorithms are not designed 
to detect local optima and do not have any obvious benefit to cross large Hamming gaps.

We suggest a mechanism called stagnation detection that can be added as a module to existing evolutionary algorithms (both 
with and without prior self-adjusting schemes). Added to a simple (1+1)~EA, we prove an expected runtime on the 
well-known \emph{Jump} benchmark that corresponds to an asymptotically optimal parameter setting and outperforms 
other mechanisms for multimodal optimization like heavy-tailed mutation. We also investigate the module in the context 
of a self-adjusting (1+$\lambda$)~EA and show that it combines the previous benefits of this algorithm on unimodal problems 
with more efficient multimodal optimization. 

To explore the limitations of the approach, we additionally present an example 
where both self-adjusting mechanisms, including stagnation detection, do not help to find a beneficial 
setting of the mutation rate. 
Finally, we investigate our module for stagnation detection experimentally. 
\end{abstract}

\section{Introduction}
Recent theoretical research on self-adjusting algorithms in discrete search spaces has produced a remarkable 
body of results showing that self-adjusting and self-adaptive mechanisms outperform  static parameter settings.
Examples include an analysis of the well-known (1+$(\lambda,\lambda)$)~GA using a $1/5$-rule to adjust its mutation rate
 on \om \citep{DoerrDoerrAlgorithmica18}, 
of a self-adjusting (1+$\lambda$)~EA sampling offspring with different mutation rates \citep{DoerrGWYAlgorithmica19}, matching 
the parallel black-box complexity of the \om function, and 
a self-adaptive variant of the latter \citep{DoerrWY18}. Furthermore,  
self-adjusting schemes for 
algorithms over the search space $\{0,\dots,r\}^n$ for $r>1$ provably outperform static settings \citep{DoerrDoerrKoetzingAlgorithmica18} 
of the mutation operator. Self-adjusting schemes  
are also closely related 
to hyper-heuristics which, \eg, can dynamically choose between 
different mutation 
operators and therefore outperform static settings \citep{LOWECJ19}. Besides the mutation probability, other parameters like the population sizes 
may be adjusted during the run of an evolutionary algorithm (EA) and analyzed from a runtime perspective \citep{LassigSudholtFOGA11}. Moreover, 
there is much empirical evidence (\eg 
\citep{DoerrEtAlTheoryGuidedBenchmark18,DoerrWagnerPPSN18,RodionovaABDGECCO19,FajardoGECCO19}) showing that 
parameters of EAs should be adjusted during its run to optimize 
its runtime. See also the survey article \citep{DoerrDoerrParameterBookChapter} for an in-depth coverage of parameter 
control, self-adjusting algorithms, and theoretical runtime results.

A common feature of existing self-adjusting schemes is that they use different settings of a parameter (\eg, the 
mutation rate) and -- in some way -- measure and compare the progress achievable with the different settings. For 
example, the 2-rate \onelambdaea from  \cite{DoerrGWYAlgorithmica19} samples $\lambda/2$ of the offspring 
with strength~$r/2$ (where we define strength as the expected number of flipping bits, \ie, $n$ times the mutation probability) and the other half with strength $2r$. The strength is afterwards adjusted to the 
one used by a fittest offspring. Similarly, the $1/5$-rule \citep{DoerrDoerrAlgorithmica18} increases the mutation rate if fitness improvements 
happen frequently and decreases it otherwise. This requires that the algorithm is likely enough to make \emph{some} improvements 
with the different parameters tried or, at least, 
that the smallest disimprovement observed in unsuccessful mutations 
gives reliable hints on the choice of the parameter. However, there are situations where 
the algorithm cannot make progress and does not learn from unsuccessful mutations either. This can be the case when the 
algorithm reaches local optima escaping from which requires an unlikely event (such as flipping many bits simultaneously) to happen. Classical 
self-adjusting algorithms would observe many unsuccessful steps in such situations and suggest to set the mutation rate to its minimum although 
that might not be the best choice to leave the local optimum. In fact, the vast majority of runtime results for self-adjusting EAs is concerned 
with unimodal functions that have no other local optima than the global optimum. An exception is the work 
\citep{DangL16} which considers a self-adaptive EA allowing two different mutation probabilities 
on a specifically designed multimodal problem. Altogether, there is a lack of theoretical results 
giving guidance on how to design self-adjusting algorithms that can leave local optima efficiently.

In this paper, we address this question and propose a self-adjusting mechanism called \emph{stagnation detection} 
that adjusts mutation rates when the algorithm has reached a local optimum. In contrast to previous self-adjusting 
algorithms this mechanism is likely to increase the mutation in such situations, leading to a more efficient escape from 
local optima. This idea has been mentioned before, \eg, in the context of population sizing in stagnation \citep{EibenMVPPSN04}; also, 
recent empirical studies of the above-mentioned 2-rate \onelambdaea,
 handling of stagnation by increasing the variance was explicitly suggested in \cite{YeDoerrBaeckCEC19}.
 Our contribution has several advantages over previous discussion of stagnation detection: 
it represents a simple module that can be added to several existing evolutionary algorithms 
with little effort, it provably does not change the behavior of the algorithm on 
unimodal functions (except for small error terms), allowing the transfer of previous results, 
and we provide rigorous runtime analyses showing general upper bounds for multimodal functions including 
its benefits on the well-known \jump 
benchmark function.

In a nutshell, our stagnation detection mechanism works in the setting of pseudo-boolean optimization 
and standard bit mutation. Starting from strength $r=1$, it increases the  strength from $r$ to~$r+1$ after a long 
waiting time without improvement has elapsed, meaning it is unlikely that 
an improving bit string at Hamming distance~$r$ exists. 
This approach bears some resemblance with variable neighborhood search (VNS) \citep{HansenMladenovic18}; however, 
the idea of VNS is to apply local search with a fixed neighborhood until reaching a local 
optimum  and then to adapt the neighborhood structure. There have also been 
so-called quasirandom evolutionary algorithms \citep{DoerrFouzWittGECCO10} that search 
the set of Hamming neighbors of 
a search point more 
systematically; however, these approaches do not change the expected number 
of bits flipped. In contrast, 
our stagnation detection uses the whole time an unbiased  
randomized global search operator 
in an EA and just adjusts the underlying mutation probability. Statistical significance of 
long waiting times is used, indicating that improvements at Hamming distance~$r$ are unlikely to exist;  
this is rather remotely related to (but clearly inspired by)  the estimation-of-distribution algorithm  
sig-cGA \cite{DoerrKrejcaGECCO18} that uses statistical significance to counteract genetic drift. 

This paper is structured as follows: In Section~\ref{sec:preliminaries}, we introduce the concrete mechanism 
for stagnation detection and employ it in the context of a simple, static \oneoneea and the already self-adjusting 
2-rate \onelambdaea. Moreover, we collect tools for the analysis that are used in the rest of the paper. 
Section~\ref{sec:upper} deals with concrete runtime bounds for the \oneoneea and \onelambdaea 
with stagnation detection. Besides 
general upper bounds, we prove a concrete result for the \jump benchmark function that is asymptotically optimal 
for algorithms using standard bit mutation and outperforms previous mutation-based algorithms for this function 
like the heavy-tailed EA from  \cite{DoerrLMNGECCO17}. Elementary techniques are sufficient to show these results. To explore the limitations of stagnation detection and 
other self-adjusting schemes, we propose in Section~\ref{sec:needhighmut} a function where these mechanisms 
provably fail to set the mutation rate to a beneficial regime. 
As a technical tool, we use drift analysis and analyses of occupation times for 
processes with strong drift. To that purpose, we use a theorem by Hajek \citep{HajekDrift} 
on occupation times that, to the best of the knowledge, was not used for the analysis 
of randomized search heuristics before and may be of independent interest. Finally, in Section~\ref{sec:experiments}, we add some empirical results, 
showing that the asymptotically smaller runtime of our algorithm on \jump is also visible for small problem dimensions. 
We finish with some conclusions. 

\section{Preliminaries}
\label{sec:preliminaries}
We shall now formally define the algorithms analyzed and present some fundamental 
tools for the analysis.

\subsection{Algorithms}
We are concerned with pseudo-boolean functions $f\colon\{0,1\}^n\to \R$ that 
\wlo{} are to be maximized. A simple and well-studied EA studied in many runtime analyses (\eg, \cite{DrosteJW02}) 
is the \oneoneea displayed in Algorithm~\ref{alg:oneone-classic}. It uses a standard bit mutation with strength~$r$, where 
$1\le r\le n/2$, which means that every bit is flipped independently with probability~$r/n$. Usually, 
$r=1$ is used, which is the optimal strength on linear functions \citep{WittCPC2013}. Smaller strengths lead 
to less than $1$ bit being flipped in expectation, and strengths above $n/2$ in binary search spaces are considered ``ill-natured'' 
\citep{AntipovDKFOGA19} since a mutation at a bit should not be more likely than a non-mutation.

\begin{algorithm}
	\caption{\ooea with static strength~$r$}
	\label{alg:oneone-classic}
	\begin{algorithmic}
		\State Select $x$ uniformly at random from $\{0, 1\}^n$
		\For{$t \gets 1, 2, \dots$}
		\State Create $y$ by flipping each bit in a copy of $x$ independently with probability~$\frac{r}{n}$.
		\If{$f(y) \ge f(x)$}
		\State $x \gets y$.
		\EndIf
		\EndFor
	\end{algorithmic}
\end{algorithm}

The \emph{runtime} (also called \emph{optimization time}) of the \oneoneea on a function~$f$ 
is the first point of time~$t$ where a search point of maximal fitness has been created; often 
the expected runtime, \ie, the expected value of this time, is analyzed.  
The \oneoneea with $r=1$ has been extensively studied on simple unimodal problems 
like \[
\onemax(x_1,\dots,x_n)\coloneqq \ones{x},\]  and 
\[
\leadingones(x_1,\dots,x_n)\coloneqq \sum_{i=1}^n \prod_{j=1}^i x_j
\]
but also on the multimodal $\jump_m$ 
function with gap size~$m$ defined as follows:
\[
\jump_m(x_1,\dots,x_n) = 
\begin{cases}
m + \ones{x} & \text{ if $\ones{x}\le n-m$ or $\ones{x}=n$}\\
n-\ones{x} & \text{ otherwise}
\end{cases}
\]
The classical \oneoneea with $r=1$ optimizes these functions in expected time 
$\Theta(n\log n)$, $\Theta(n^2)$ and $\Theta(n^m+n\log n)$, respectively (see, \eg, \cite{DrosteJW02}).

The first two problems are unimodal functions, while \jump for $m\ge 2$ is multimodal and has a local optimum 
at the set of points where $\ones{x}=n-m$. To overcome this optimum, $m$ bits have to flip simultaneously. It 
is well known \citep{DoerrLMNGECCO17} that the time to leave this optimum is minimized at strength~$m$ instead of 
strength~$1$ (see below for a more detailed exposition of this phenomenon). Hence, the \oneoneea would benefit from 
increasing its strength when sitting at the local optimum. The algorithm does not immediately 
know that it sits at a local optimum. However, if there is an improvement at Hamming distance~$1$ then such an improvement 
has probability at least $(1-1/n)^{n-1}/n\ge 1/(en)$ with strength~$1$, 
and the probability of not finding it in $en\ln n$ steps is at most 
\[
(1-1/(en))^{en\ln n}\le 1/n.
\]
Similarly, if there is an improvement that can be reached by flipping $k$ bits simultaneously and the current strength equals~$k$, then 
the probability of not finding it within $((en)^k/k^k) \ln n$ steps is at most 
\[
\left(1- \frac{k^k}{(en)^k}\right)^{((en)^k/k^k)\ln n}\le \frac{1}{n}.
\]
Hence, after $((en)^k/k^k) \ln n$ steps without improvement there is high evidence for that no improvement at Hamming distance~$k$ exists.

We put this ideas into an algorithmic framework by counting the number of so-called unsuccessful steps, \ie, 
steps that do not improve fitness. Starting from strength~$1$, 
the strength is increased from $r$ to~$r+1$ when the counter exceeds the threshold $2((en)^r/r^r) \ln (nR)$ for a parameter $R$ to be discussed shortly. 
Both counter and strength are reset (to $0$ and $1$ respectively) when an improvement is found, \ie, a search point of strictly better fitness. In the 
context of the \oneoneea, the stagnation detection (SD) is incorporated in Algorithm~\ref{alg:sdoneone}. We see that the counter~$u$ is increased 
in every iteration that does not find a strict improvement. However, search points of equal fitness are still accepted as in the classical 
\oneoneea. We note that the strength stays at its initial value~$1$ if finding an improvement does not take longer than 
the corresponding threshold $2en\ln(Rn)$; if the threshold is never exceeded the algorithm behaves identical to the \oneoneea with 
strength~$1$ according to Algorithm~\ref{alg:oneone-classic}.

The parameter $R$ can be used to control the probability of failing to find an improvement at the ``right'' strength. 
More precisely, the probability of not finding an improvement 
at distance~$r$ 
with strength~$r$ is at most 
\[
\left(1- \frac{r^r}{(en)^r}\right)^{(2(en)^r/r^r)\ln (nR)}\le \frac{1}{(nR)^2}.
\]
As shown below in Theorem~\ref{lem:unimodal}, if $R$ is set to the number of fitness values of the underlying function~$f$, \ie, $R=\card{\im(f)}$, 
then the probability 
of ever missing an improvement at the right strength is sufficiently small throughout the run. We recommend at least $R=n$ if nothing is known 
about the range of~$f$, resulting in a threshold of at least $4((en)^r/r^r) \ln(n) $ at strength~$r$.

We also add stagnation detection to the \oplea with self-adjusting mutation rate defined in \cite{DoerrGWYAlgorithmica19} (adapted to maximization 
of the fitness function), where 
half of the offspring are created with strength~$r/2$ and the other half with strength~$2r$; see Algorithm~\ref{alg:sdonelambda}.
Unsuccessful 
mutations are counted in the same way as in Algorithm~\ref{alg:sdoneone}, taking into account that $\lambda$ offspring are used. 
The algorithm can be in two states. 
Unless the counter threshold is reached and a strength increase is triggered, the algorithm behaves the same as the self-adjusting 
\oplea from \cite{DoerrGWYAlgorithmica19} (State~2). If, however, 
the counter threshold $2en\ln(nR)/\lambda$ is reached, then the algorithm changes to the module that keeps increasing the strength until 
a strict improvement is found (State~1). Since it does not make sense to decrease the strength in this situation, all offspring use the same strength
until finally an improvement is found and the algorithm changes back to the original behavior using two strengths for the offspring. 
The boolean variable~$g$ keeps track of the state.
From the discussion of these two algorithms, we see that the stagnation detection consisting of counter for unsuccessful steps, 
threshold, and strength increase also can be added to other algorithms, while keeping their original behavior unless 
the counter threshold it reached.

\begin{algorithm}
	\caption{\ooea with stagnation detection (\sdooea)}
	\label{alg:sdoneone}
	\begin{algorithmic}
		\State Select $x$ uniformly at random from $\{0, 1\}^n$ and set $r_1 \gets 1$.
		\State $u\gets 0$.
		\For{$t \gets 1, 2, \dots$}
		\State Create $y$ by flipping each bit in a copy of $x$ independently with probability~$\frac{r_t}{n}$.
		\State $u\gets u+1$.
		\If{$f(y) > f(x)$}
		\State $x \gets y$.
		\State $r_{t+1}\gets 1$.
		\State $u\gets 0$.
		\ElsIf {$f(y) = f(x)$ \textbf{and} $r_t=1$}
		\State $x \gets y$.

		\EndIf
		\If{$u > 2\left(\frac{en}{r_t} \right)^{r_t}\ln (nR)$} 
		\State $r_{t+1}\gets \min\{r_t+1,n/2\}$.
		\State $u\gets 0$.
		\Else
		\State $r_{t+1}\gets r_t$.
		\EndIf
		\EndFor
	\end{algorithmic}
\end{algorithm}

\begin{algorithm}
\caption{\oplea with two-rate standard bit mutation and stagnation detection (\saonelambdaea)}
\small
\label{alg:sdonelambda}
	\begin{algorithmic}
		\State Select $x$ uniformly at random from $\{0, 1\}^n$ and set $r_1 \gets r^{\text{init}}$.
		  \State $u\gets 0$.
		  \State $g\gets \textit{False}$ (boolean variable indicating stagnation detection)
		\For{$t \gets 1, 2, \dots$}
			\State $u\gets u+1$.

			\If{ $g = \textit{True}$} \label{code:gap}
			    \phase{Stagnation Detection}			
			\For{$i \gets 1, \dots, \lambda$}

\State Create $x_i$ by flipping each bit in a copy of $x$ independently with probability~$\frac{r_t}{n}$.

\EndFor
\State $y \gets \arg\max_{x_i} f(x_i)$ (breaking ties randomly).
			\If{$f(y) > f(x)$}
				\State $x \gets y$.
				\State $r_{t+1}\gets r^{\text{init}}$.
				\State $g \gets \textit{False}$.
				\State $u\gets 0$.
			\Else
			\If{$u > 2\left(\frac{en}{r_t} \right)^{r_t}\ln (nR)/\lambda$} 
      \State $r_{t+1}\gets \min\{r_t+1,n/2\}$.
			\State $u\gets 0$.
			\Else
			\State $r_{t+1}\gets r_t$.
			\EndIf
			\EndIf
			\Else \quad (\ie, $g = \textit{False}$)
						    \phase{Self-Adjusting \oplea}
						\For{$i \gets 1, \dots, \lambda$}

        \State Create $x_i$ by flipping each bit in a copy of $x$ independently with probability~$\frac{r_t}{2n}$ if $i\leq\lambda/2$ and with probability $2r_t/n$ otherwise.

			\EndFor
			\State $y \gets \arg\min_{x_i} f(x_i)$ (breaking ties randomly).
			\If{$f(y) \ge f(x)$}
						\If{$f(y) > f(x)$}
			\State $u\gets 0$.
			\EndIf
			\State $x \gets y$.

			\EndIf
			      \State Perform one of the following two actions with prob.~$1/2$:
			\State \quad -- Replace $r_t$ with the strength that $y$ has been created with.
			\State \quad -- Replace $r_t$ with either $r_t/2$ or $2r_t$, each with probability~$1/2$.
			\State $r_{t+1} \gets \min\{\max\{2, r_t\}, n/4\}$.
			\If{$u > 2\left(\frac{en}{r_t} \right)^{r_t}\ln (nR)/\lambda$} 
			\State $r_{t+1}\gets 2$.
			\State $g\gets \textit{True}$.
			\State $u\gets 0$.
			\EndIf
			\EndIf
		\EndFor
	\end{algorithmic}
\end{algorithm}

\subsection{Mathematical Tools}
We now collect frequently used mathematical tools. The first one is a simple summation formula 
used to analyze the time spent until the strength is increased to a certain value.
\begin{lemma} \label{lem:partial-sum}
For $m<n$, we have $\sum_{ i=1}^{m}\left(\frac{en}{i}\right)^i < \frac{n}{n-m} \left(\frac{en}{m}\right)^m$.
\end{lemma}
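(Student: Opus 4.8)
The plan is to dominate the sum by a geometric series of ratio exactly $m/n$, so that its infinite sum equals $\frac{1}{1-m/n}=\frac{n}{n-m}$ times the final (and largest) term $\bigl(\tfrac{en}{m}\bigr)^m$. Writing $a_i:=\bigl(\tfrac{en}{i}\bigr)^i$, the crucial step is to show that the terms decay fast enough as the index decreases, namely that $a_{i-1}<\tfrac{m}{n}\,a_i$ for every $2\le i\le m$.

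To establish this I would just compute the ratio of consecutive terms,
\[
\frac{a_{i-1}}{a_i}=\frac{(en)^{i-1}/(i-1)^{i-1}}{(en)^i/i^i}=\frac{1}{en}\cdot\frac{i^i}{(i-1)^{i-1}}=\frac{i}{en}\left(1+\frac{1}{i-1}\right)^{i-1},
\]
and then invoke the standard bound $\bigl(1+\tfrac{1}{i-1}\bigr)^{i-1}<e$, valid for every integer $i\ge 2$, to obtain $a_{i-1}/a_i<\tfrac{i}{n}\le\tfrac{m}{n}$ whenever $2\le i\le m$. Iterating this gives $a_{m-j}<\bigl(\tfrac{m}{n}\bigr)^{j}a_m$ for $0\le j\le m-1$, hence
\[
\sum_{i=1}^{m}a_i=\sum_{j=0}^{m-1}a_{m-j}<a_m\sum_{j=0}^{m-1}\left(\frac{m}{n}\right)^{j}<a_m\sum_{j=0}^{\infty}\left(\frac{m}{n}\right)^{j}=\frac{a_m}{1-m/n}=\frac{n}{n-m}\left(\frac{en}{m}\right)^{m},
\]
with strictness coming from $\bigl(\tfrac{m}{n}\bigr)^m>0$; this argument also covers the degenerate case $m=1$, where one only needs the trivial bound $a_1<\tfrac{n}{n-1}\,a_1$.

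There is no genuine obstacle here. The only point requiring a little care is that the geometric ratio must come out as \emph{exactly} $m/n$ (rather than merely $O(m/n)$) so that the leading constant matches $\frac{n}{n-m}$; this is precisely why the sharp estimate $\bigl(1+\tfrac{1}{i-1}\bigr)^{i-1}<e$ is the right tool, as any looser bound on that quantity would spoil the constant.
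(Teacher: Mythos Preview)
Your argument is correct and follows essentially the same route as the paper: both proofs reindex the sum from the top term $a_m=(en/m)^m$, use the inequality $(1+1/k)^k<e$ (equivalently $(1+x)^{1/x}<e$) to bound $a_{m-j}\le (m/n)^j a_m$, and then sum the resulting geometric series to obtain the factor $n/(n-m)$. The only cosmetic difference is that the paper writes the closed form $a_{m-j}=(m/(en))^j\bigl(1+\tfrac{j}{m-j}\bigr)^{m-j}a_m$ directly, whereas you bound the consecutive ratio $a_{i-1}/a_i$ and iterate; one tiny slip is that your stated bound $a_{m-j}<(m/n)^j a_m$ should read $\le$ at $j=0$, but this does not affect the strict inequality for the full sum.
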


\begin{proofwithoutbox} We have 
$\left(\frac{en}{m-i}\right)^{m-i}= \left(\frac{m}{en}\right)^i \left(\frac{m}{m-i}\right)^{m-i} \left(\frac{en}{m}\right)^{m}$ for all~$i~<~m$, so
		\begin{align*}
		\sum_{ i=1}^{m}\left(\frac{en}{i}\right)^i &=\sum_{ i=0}^{m-1}\left(\frac{en}{m-i}\right)^{m-i}
		= \left(\frac{en}{m}\right)^m \sum_{ i=0}^{m-1} \left( \frac{m}{en}\right)^i  \left(1+\frac{i}{m-i}\right)^{m-i}  \\
		& < \left(\frac{en}{m}\right)^m \sum_{ i=0}^{m-1} \left( \frac{m}{n}\right)^i 
		< \frac{n}{n-m} \left(\frac{en}{m}\right)^m.  
		\qquad\Box\end{align*}
\end{proofwithoutbox}

The following result due to Hajek applies to processes with a strong drift towards some target state, resulting 
in decreasing occupation probabilities with respect to the distance from the target. On top of this occupation 
probabilities, the theorem bounds \emph{occupation times}, \ie, the number of steps that the process spends 
in a non-target state over a certain time period.

\begin{theorem}[Theorem~3.1 in~\cite{HajekDrift}]
\label{theo:hajek-occ-times}
Let $X_t$, $t\ge 0$, be a stochastic process adapted to a filtration $\filtt$ on $\R$. Let $a\in \R$. Assume 
for $\Delta_t=X_{t+1}-X_t$ that there are $\eta>0, \delta<1$ and $D>0$ such that 
that 
\begin{enumerate}
\item $\expect{e^{\eta \Delta}\mid \filtt; X_t>a} \le \rho$
\item $\expect{e^{\eta \Delta}\mid \filtt; X_t\le a} \le D$
\end{enumerate}  
If additionally $X_0$ is of exponential type (\ie, $\expect{e^{\lambda X_0}}$ is finite for some~$\lambda>0$) then for any constant $\epsilon>0$
 there exist absolute constants 
$K\ge 0, \delta<1$ such that for all $b\ge a$ and $T\ge 1$ 
\[
\Prob\Bigl(\frac{1}{T}\sum_{t=1}^T \indic{X_t\le b} \le 1-\epsilon-\frac{1-\epsilon}{1-\rho}De^{\eta (a-b)}\Bigr) \;\le\; K\delta^T
\]
\end{theorem}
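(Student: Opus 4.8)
The plan is to work with the auxiliary potential $V_t := e^{\eta X_t}$ and reduce the occupation-time statement to a Chernoff-type deviation bound obtained via an excursion decomposition around the level $a$. First, conditions~(1) and~(2) give, since $\expect{V_{t+1}\mid\filtt}=V_t\,\expect{e^{\eta\Delta}\mid\filtt}$, that $\expect{V_{t+1}\mid\filtt}\le\rho V_t$ on $\{X_t>a\}$ and $\expect{V_{t+1}\mid\filtt}\le D e^{\eta a}$ on $\{X_t\le a\}$, hence the near-supermartingale estimate $\expect{V_{t+1}\mid\filtt}\le\rho V_t+De^{\eta a}$. Taking expectations and unrolling, together with the exponential-type assumption on $X_0$, yields $\sup_{t\ge 0}\expect{V_t}\le\expect{V_0}+De^{\eta a}/(1-\rho)<\infty$; and on any run of consecutive indices on which $X_t>a$, $V_t$ is a genuine supermartingale that is contracted by the factor $\rho<1$ at every step. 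Writing $\gamma:=\epsilon+\tfrac{1-\epsilon}{1-\rho}De^{\eta(a-b)}$, the event in the theorem is exactly $\{\tfrac1T\sum_{t=1}^T\indic{X_t>b}\ge\gamma\}$, so it suffices to bound the probability of the occupation time of $\{X_t>b\}$ exceeding $\gamma T$.

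Next I would decompose $[1,T]$ into the maximal ``excursions'' on which $X_t>a$ and the intervening sojourns in $\{X_t\le a\}$. Consider one excursion starting at a time $\sigma$ with $X_{\sigma-1}\le a$; then $\expect{e^{\eta(X_\sigma-a)}\mid\mathcal{F}_{\sigma-1}}\le D$ by condition~(2). Using Markov's inequality on the $\rho$-contracted supermartingale $V$, the excursion length $L$ satisfies $\prob{L\ge\ell\mid\mathcal{F}_{\sigma-1}}\le e^{-\eta a}\rho^{\ell-1}\expect{V_\sigma\mid\mathcal{F}_{\sigma-1}}\le D\rho^{\ell-1}$, i.e.\ a geometric tail of rate $\rho$, uniformly in $\sigma$; likewise the time $O$ spent above level $b\ge a$ during this excursion obeys $O\le L$ and $\expect{O\mid\mathcal{F}_\sigma}\le\sum_{s\ge0}e^{-\eta b}\rho^s V_\sigma=e^{\eta(X_\sigma-b)}/(1-\rho)$, hence $\expect{O\mid\mathcal{F}_{\sigma-1}}\le De^{\eta(a-b)}/(1-\rho)$. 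Since $O$ is a count dominated by the geometrically-tailed $L$, its conditional moment generating function is finite for $\theta<\ln(1/\rho)$ and satisfies $\expect{e^{\theta O}\mid\mathcal{F}_{\sigma-1}}\le 1+(e^\theta-1)\bigl(De^{\eta(a-b)}/(1-\rho)+r(\theta)\bigr)$ with an error term $r(\theta)\to0$ as $\theta\to0$, uniformly over all excursions and all $b\ge a$.

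I would then chain the excursions: letting $O_j$ be the occupation of $\{X_t>b\}$ in the $j$-th excursion (and $O_j:=0$ once no further excursions occur in $[1,T]$), one has $\sum_{t=1}^T\indic{X_t>b}=\sum_{j=1}^T O_j$ with at most $T$ nonzero terms, and iterating the one-excursion bound along the excursion boundaries gives $\expect{\exp(\theta\sum_{t=1}^T\indic{X_t>b})}\le \kappa\cdot\exp\bigl(T(e^\theta-1)(De^{\eta(a-b)}/(1-\rho)+r(\theta))\bigr)$, where the constant $\kappa$ absorbs the contribution of the single possible initial excursion started from $X_0$ — finite precisely because $X_0$ is of exponential type. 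A Chernoff bound at threshold $\gamma$ then produces $\prob{\tfrac1T\sum_{t=1}^T\indic{X_t>b}\ge\gamma}\le\kappa\exp\bigl(T[(e^\theta-1)(De^{\eta(a-b)}/(1-\rho)+r(\theta))-\theta\gamma]\bigr)$. Because $\gamma-De^{\eta(a-b)}/(1-\rho)=\epsilon\bigl(1-De^{\eta(a-b)}/(1-\rho)\bigr)$ is a strictly positive constant whenever $De^{\eta(a-b)}/(1-\rho)<1$ (and when $De^{\eta(a-b)}/(1-\rho)\ge1$ the asserted bound is vacuous, since its threshold $1-\epsilon-\tfrac{1-\epsilon}{1-\rho}De^{\eta(a-b)}$ is then non-positive while $\tfrac1T\sum\indic{X_t\le b}\ge0$), one can fix $\theta>0$ small enough that $r(\theta)$ is below that gap; the bracketed exponent becomes a negative constant $-c$, and setting $K:=\kappa$ and $\delta:=e^{-c}<1$ gives $\prob{\tfrac1T\sum_{t=1}^T\indic{X_t\le b}\le1-\gamma}\le K\delta^T$, as claimed.

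I expect the crux to be the passage from ``$V_t$ has a uniformly bounded mean'' to an \emph{exponential-in-$T$} deviation bound: the direct approach of applying a Chernoff bound to $\sum_t e^{\eta X_t}$ fails, because $e^{\eta X_t}$ only carries a single exponential moment (in fact merely polynomial tails), so the argument must instead go through the occupation count within an excursion, which inherits the genuine geometric tail of the excursion length. The technical care then lies in making the error term $r(\theta)$ uniform over all excursions and all levels $b$, in handling the random number of excursions by iterated conditioning with zero padding, and in cleanly separating the vacuous regime $De^{\eta(a-b)}/(1-\rho)\ge1$ so that the constants $K$ and $\delta$ can be taken independent of $b$ and $T$.
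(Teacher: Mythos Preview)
The paper does not prove this theorem at all: it is quoted verbatim as Theorem~3.1 of Hajek's paper and used purely as a black-box tool in the analysis of the \saonelambdaea on \needhighmut (Theorem~\ref{theo:onelambda-fail-needhighmut}). There is therefore no ``paper's own proof'' to compare your proposal against.

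That said, your sketch is recognisably the right shape for Hajek's argument: the exponential Lyapunov function $V_t=e^{\eta X_t}$, the near-supermartingale inequality $\expect{V_{t+1}\mid\filtt}\le\rho V_t+De^{\eta a}$, the excursion decomposition around level~$a$, geometric tails for excursion lengths via Markov on the contracted supermartingale, and a Chernoff bound on the sum of per-excursion occupations of $\{X_t>b\}$. The identification of the vacuous regime $De^{\eta(a-b)}/(1-\rho)\ge 1$ is also correct and necessary for claiming constants independent of~$b$.

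Two places deserve more care if you intend to turn this into a full proof. First, the geometric-tail step $\prob{L\ge\ell\mid\mathcal{F}_{\sigma-1}}\le D\rho^{\ell-1}$ requires that the $\rho$-contraction of $V$ be applied \emph{on the event} that the excursion has not yet ended; you need to carry the indicator $\indic{X_\sigma>a,\dots,X_{\sigma+\ell-2}>a}$ through the tower of conditional expectations, not merely apply Markov to $V_{\sigma+\ell-1}$ unconditionally. Second, your MGF bound for $O$ is asserted in the form $1+(e^\theta-1)(\mu+r(\theta))$ with $r(\theta)\to 0$, but $O$ is not itself geometric --- it is dominated by a geometric variable --- so the uniform control of $r(\theta)$ over all excursions and all $b\ge a$ is the genuine technical work here; Hajek handles this via a direct calculation with the supermartingale rather than an abstract ``dominated by geometric'' argument. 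Neither point is a fatal gap, but both are exactly the spots where a referee would ask for details.
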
 

\section{Analysis of \sdooea}
\label{sec:upper}
In this section, we study the \sdooea from Algorithm~\ref{alg:sdoneone} in greater detail. We show general upper and lower bounds on multimodal 
functions and then analyze the special case of \jump more precisely. We also show the important result that on unimodal functions, 
the \sdooea with high probability 
behaves in the same way as the classical \ooea with strength~$1$, including the same asymptotic bound on the expected optimization time.

\subsection{Expected Times to Leave Local Optima}

In the following, given a fitness function $f\colon\{0,1\}^n\to\R$, we call the \emph{gap} of the point $x\in \{0,1\}^n$ 
the minimum hamming 
distance to points  with strictly larger fitness function value. Formally,
\begin{align*}
\gap(x)\coloneqq \min\{H(x,y):f(y)>f(x) , y\in \{0,1\}^n\}.
\end{align*}

Obviously, it is not possible to improve fitness 
by changing less than $\gap(x)$ bits 
of the current search point. However, if the algorithm creates a point of $\gap(x)$ 
distance from the current search point $x$, we can make progress with a positive probability. Note that 
$\gap(x)=1$ is allowed, so the definition also covers points that are not local optima.

Hereinafter, $T_x$ denotes the number of steps of \sdooea to find an 
improvement point when the current search point is $x$. Let phase~$r$ consists of all points of time 
where strength~$r$ is used in the algorithm with stagnation counter.
Let \emph{$E_r$} be the event of \textbf{not} finding the optimum by the end of 
phase~$r$, and \emph{$U_r$} be the event of not finding the optimum 
during phases 1 to $r-1$ and finding in phase $r$. In other words, $U_r=E_1\cap \dots \cap E_{r-1} \cap \overline{E_{r}}$.

The following lemma will be used throughout this section. It shows that the probability of 
not finding a search point with larger fitness value in phases of larger strength than the real gap 
size is small; however, by definition phase~$n/2$ is not finished before the algorithm finds an improvement. In the statement of the lemma, recall that the parameter $R$ controls the threshold for the number of unsuccessful steps in stagnation detection.

\begin{lemma} \label{lem:failure-probability}
	Let $x\in\{0,1\}^n$ be the current search point of the \sdooea on a pseudo-boolean fitness function $f\colon\{0,1\}^n\to \R$ 
	and let $m=\gap(x)$. 
	Then 
	\begin{align*}
		\prob{E_r} \le     
		\begin{cases}
		\frac{1}{(n R)^{2}} & \text{ if } m \le r < n/2\\
		0 & \text{if } r=n/2.
		\end{cases}
	\end{align*}
\end{lemma}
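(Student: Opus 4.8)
The plan is to bound $\prob{E_r}$ by the probability of the single simpler event that no strictly improving point is created \emph{during phase~$r$ alone}; since $E_r$ implies this event, this suffices. The case $r=n/2$ is immediate: the update $r_{t+1}\gets\min\{r_t+1,n/2\}$ forces the strength to stay at $n/2$, so phase~$n/2$ ends only when a strictly better point appears, and a mutation with strength~$n/2$ turns the current point into any fixed target at Hamming distance~$d$ with probability $(1/2)^{d}(1/2)^{n-d}=2^{-n}>0$; hence an improvement is found almost surely and $\prob{E_{n/2}}=0$.

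For $m\le r<n/2$, the core is a lower bound on the per-step success probability. If the current search point has gap at most~$r$, there is a point $y$ of larger fitness at Hamming distance $d\le r$, and one standard bit mutation with strength~$r$ produces~$y$ with probability $(r/n)^{d}(1-r/n)^{n-d}$. I would then prove the elementary inequality
\[
\left(\frac{r}{n}\right)^{d}\left(1-\frac{r}{n}\right)^{n-d}\;\ge\;\left(\frac{r}{en}\right)^{r}\qquad\text{whenever }1\le d\le r\le n/2,
\]
by dividing through by $(r/(en))^{r}$ and combining $(1-r/n)^{n-r}\ge e^{-r}$ (equivalent to $\ln(1+\tfrac{r}{n-r})\le\tfrac{r}{n-r}$) with the factor $((n-r)/r)^{r-d}\ge 1$, where exactly the assumption $r\le n/2$ is used. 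Phase~$r$ lasts at least $2(en/r)^{r}\ln(nR)$ iterations, since the counter~$u$ is incremented each iteration and reset only upon an improvement or a strength increase; treating these iterations as independent Bernoulli trials and using $1-z\le e^{-z}$,
\[
\prob{E_r}\;\le\;\left(1-\frac{r^{r}}{(en)^{r}}\right)^{2(en/r)^{r}\ln(nR)}\;\le\;e^{-2\ln(nR)}\;=\;\frac{1}{(nR)^{2}}.
\]

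The step I expect to be the main obstacle is showing that this per-step bound really applies at \emph{every} step of phase~$r$. For $r\ge 2$ the algorithm accepts only strict improvements, so the search point is constant during phase~$r$, and on the event that no improvement has yet occurred it still has fitness $f(x)$; what remains is to check that its gap is at most $m\le r$. This is clear if no neutral move was made during phase~$1$ (the point is then still~$x$), but a neutral move at strength~$1$ can raise the gap of a point by up to~$1$, so a priori repeated neutral moves could let it grow past~$r$. I would resolve this by arguing that in every situation where this lemma is invoked the relevant plateau consists of points of equal gap --- for example, the local optimum of $\jump_m$ has no neutral neighbours at all --- so the bound $\gap\le m$ is preserved throughout, and the per-step success probability of at least $(r/(en))^{r}$ (which for $r=1$ is the classical $1/(en)$-bound for improvements at Hamming distance~$1$) holds at every step of phase~$r$. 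Combining these observations yields the stated bound.
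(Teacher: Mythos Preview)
Your argument is essentially the paper's, only more explicit. The paper's proof is a two-liner: it writes
\[
\prob{E_r}\le\Bigl(1-(1-r/n)^{n-m}(r/n)^m\Bigr)^{2(en/r)^r\ln(nR)}\le (nR)^{-2}
\]
and handles $r=n/2$ exactly as you do. It neither spells out the implicit inequality $(1-r/n)^{n-m}(r/n)^m\ge (r/(en))^r$ nor says anything about neutral moves. Your derivation of that inequality (in the more general form with $d\le r$, via $(1-r/n)^{n-r}\ge e^{-r}$ and $((n-r)/r)^{r-d}\ge 1$) is correct and fills in the step the paper skips.

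On the neutral-move issue you flag: you are being more careful than the paper, not less. The paper's proof tacitly plugs in the success probability $(1-r/n)^{n-m}(r/n)^m$ at every step of phase~$r$, which already presupposes that the current point still has gap~$m$; it never mentions that equal-fitness moves are accepted at strength~$1$. So this is not a defect of your proof relative to the paper's. Your proposed resolution---that in every use of the lemma (\jump, unimodal functions) the relevant fitness level has constant gap, so neutral moves cannot increase it---is exactly what makes the paper's argument valid in its applications, and is the honest way to close the gap both proofs share.
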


\begin{proof}
The algorithm spends $2e^rn^r/r^{r} \ln (nR)$ steps at strength~$r$ until it increases the counter. Then, the 
		probability of not improving at strength $r\ge m$ is at most
		\begin{align*}
		\prob{E_r} & = \left(1-\left(1-\frac rn\right)^{n-m}\left(\frac rn\right)^m\right)^{2e^rn^r/r^{r}  \ln (nR)} \le \frac{1}{(n R)^{2}}.
		\end{align*}
		
	During phase $n/2$, the algorithm does not increase the strength, and it continues to mutate each bit 
	with probability of $1/2$. As each point on domain is accessible in this phase, the probability of eventually failing to find the improvement 
	is~$0$.
\end{proof}

We turn the previous observation into a general lemma on improvement times.

\begin{theorem} \label{theo:gapx}
Let $x\in\{0,1\}^n$ be the current search point of the \sdooea 
 on a  pseudo-boolean function $f\colon\{0,1\}^n\to \R$.
	Define $T_x$ as the time to create a strict improvement 
	 and $L_{x,k} \coloneqq \expect{T_x}$ if $\gap(x)=k$. Then,  using 
	$m=\min\{k,n/2\}$, we have for all $x$ with $\gap(x)=k$ that
	\[\left(\frac{en}{m}\right)^{m}\left(1-\frac {m^2}{n-m}\right) < L_{x,k}\le 2\left(\frac{en}{m}\right)^{m}\left(1+\frac {5m}{n}\ln(nR)\right).\]
\end{theorem}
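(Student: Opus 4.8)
Call the iterations in which strength~$r$ is used \emph{phase~$r$}. By construction phase~$r$ lasts at most $t_r \coloneqq 2(en/r)^r\ln(nR)+1$ steps before the counter forces $r\mapsto r+1$, and once an improvement is found the strength drops back to~$1$; write $p_r \coloneqq (1-r/n)^{n-r}(r/n)^r$ for the probability of creating one fixed point at Hamming distance~$r$ in a single step at strength~$r$. For the upper bound I would split $\expect{T_x}\le\expect{T_{<m}}+\expect{T_{=m}}+\expect{T_{>m}}$ into the time spent in phases $1,\dots,m-1$, in phase~$m$, and in all later phases. The first part is deterministically at most $\sum_{r=1}^{m-1}t_r$; by Lemma~\ref{lem:partial-sum} (with upper limit $m-1$) together with the estimate $(en/(m-1))^{m-1}\le(m/n)(en/m)^m$, which follows from $(1-1/m)^{m-1}\ge 1/e$, this is at most $2\ln(nR)\,\frac{m}{n-m}(en/m)^m$ plus the negligible additive $m-1$. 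In phase~$m$ the current point has gap~$m$, so one step improves with probability at least~$p_m$; the time in phase~$m$ is thus stochastically dominated by a geometric variable of parameter~$p_m$, and since $(1-m/n)^{n-m}\ge e^{-m}$ (from $\ln(1-t)\ge -t/(1-t)$) we get $\expect{T_{=m}}\le 1/p_m\le(en/m)^m$. For the tail, recall that after phase~$1$ the search point does not change until an improvement, so Lemma~\ref{lem:failure-probability} gives that each phase~$r$ with $m\le r<n/2$ fails with probability at most $(nR)^{-2}$ and these compound: $\prob{\text{phase }r\text{ is reached}}\le(nR)^{-2(r-m)}$ for $r>m$. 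Since $t_{r+1}/t_r=O(n)\ll(nR)^2$, the series $\sum_{r>m}(nR)^{-2(r-m)}t_r$ is geometrically decreasing, dominated by its first term $O((nR)^{-2}\,en\ln(nR)(en/m)^m)$, and phase~$n/2$ contributes only $(nR)^{-2(n/2-m)}2^n$ because there any fixed improving point is hit with probability $2^{-n}$; both are of lower order. Adding the three contributions and collecting coefficients — using $n-m\ge n/2$ to turn $\frac{2m}{n-m}$ into $\le\frac{4m}{n}$ and $(en/m)^m\ge en$ to absorb $m-1$ — yields $\expect{T_x}\le 2(en/m)^m(1+\frac{5m}{n}\ln(nR))$.

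\textbf{Lower bound.} Here $\gap(x)=m$ means every improving point lies at Hamming distance at least~$m$ from~$x$. For a fixed point~$y$ with $H(x,y)=m$, the probability of producing exactly~$y$ at strength~$r$ is $(r/n)^m(1-r/n)^{n-m}$, and differentiating in~$r$ shows this is maximised at $r=m$ (which lies in $\{1,\dots,n/2\}$); hence every step, whatever the current strength, improves with probability at most~$p_m$ — tight, \eg, on \jump, whose only improvement from the local optimum is the all-ones string at distance~$m$. A geometric waiting-time argument then gives $\expect{T_x}\ge 1/p_m=(n/m)^m/(1-m/n)^{n-m}$, and from $1-t\le e^{-t}$ we get $(1-m/n)^{n-m}\le e^{-(m-m^2/n)}$, so $\expect{T_x}\ge(en/m)^m e^{-m^2/n}\ge(en/m)^m(1-m^2/n)\ge(en/m)^m(1-m^2/(n-m))$, as claimed (and the bound is vacuous, hence trivially true, once $m^2\ge n-m$).

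\textbf{Main obstacle.} The delicate step is the tail of the upper bound: the thresholds $t_r$ grow to roughly $(2e)^{n/2}$, so one genuinely needs the compounding of the per-phase failure probabilities from Lemma~\ref{lem:failure-probability} to keep phases beyond~$m$ at lower order, after which nailing the precise constants (the factor~$2$ and the~$5$) requires the careful bookkeeping above with $n-m\ge n/2$ and $(en/m)^m\ge en$. A secondary point is that at strength~$1$ equal-fitness moves are accepted, so the gap could in principle change during phase~$1$; for the functions of interest — in particular \jump, where the plateau consists entirely of gap-$m$ points — this does not occur, and the argument is unaffected.
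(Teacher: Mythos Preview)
Your proof is correct and follows essentially the same approach as the paper: both decompose the expected time into the contribution from phases before~$m$, phase~$m$ itself, and later phases, using Lemma~\ref{lem:partial-sum} for the first, a geometric waiting-time bound at strength~$m$ for the second, and the compounding failure probabilities of Lemma~\ref{lem:failure-probability} for the tail, while the lower bound in both cases comes from the optimality of rate~$m/n$ for hitting a single target at Hamming distance~$m$. Your time-spent-per-phase decomposition is a slight reorganisation of the paper's law-of-total-probability over the events~$U_i$, and you are a bit more explicit about the phase-$n/2$ contribution and the caveat concerning equal-fitness moves at strength~$1$, but the key ideas and estimates coincide.
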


\begin{proofwithoutbox}
	Using the law of total probability with respect to the events $U_i$ defined above, we have
	\begin{align} \label{eq:lte}
	\expect{T_x} = \sum_{i=1}^{n/2} \expect{T_x\mid U_i} \prob{U_i}.
	\end{align}
	
		Note that the algorithm does not increase the strength to more than $n/2$.
	By assuming that the algorithm pessimistically does not find a better point 
	for $r<m$, we can bound the formula \eqref{eq:lte} as follows:
	\begin{align*}
	\expect{T_x} 
	&< \underbrace{\expect{T_x \mid U_m}}_{=:S_1} +
	\underbrace{ \sum_{i=m+1}^{n/2} \expect{T_x \mid U_i} \prob{U_i}}_{=:S_2}.
	\end{align*}

	Regarding $S_1$, it takes $\sum_{ i=1}^{m-1} 2(en/i)^i \ln (n R)$ steps until the \sdooea 
	increases the strength to~$m$. When the mutation probability is $m/n$, within an expected number 
	of  $\left((m/n)^m(1-m/n)^{n-m}\right)^{-1}$ steps, a better point will be found. Thus, by using Lemma~\ref{lem:partial-sum}, we have
	\begin{align*}
	\expect{T_x \mid U_m} & \le \sum_{i=1}^{m-1} 2 \left(\frac{en}{i}\right)^i \ln (n R) + \frac{1}{(m/n)^m(1-m/n)^{n-m}} \\
	& < 2\frac{n}{n-m+1} \left(\frac{en}{m-1}\right)^{m-1}  \ln (n R) + \left(\frac{en}{m}\right)^{m} \\
	& < \left(\frac{en}{m}\right)^{m}\left(1+\frac {5m}{en}\left(1+\frac 1{m-1}\right)^{m-1}\ln(nR)\right) \\
	 & \le \left(\frac{en}{m}\right)^{m}\left(1+\frac {5m}{n}\ln(nR)\right).
	\end{align*}
	
	In order to estimate $S_2$, if $m=n/2$, the value of $S_2$ equals zero. Otherwise, by using Lemma~\ref{lem:failure-probability}, $\prob{U_i}< \prod_{j=m}^{i-1}\prob{E_j}<n^{-2(i-m)}$  for $i\ge m+1$ 
	since $R \ge 1$.  We compute
	\begin{align*}
	\sum_{i=m+1}^{n/2} \expect{T_x \mid U_i} \prob{U_i}  & \le \sum_{i=m+1}^{n/2} O\left(\left(\frac{en}{i}\right)^i\ln (nR)\right) n^{-2(i-m)} \\
	&= \ln (nR)\sum_{i=m+1}^{n/2} O\left(\left(\frac ei\right)^i n^{2m-i}\right) 
	 = o( (en/m)^m ).
	\end{align*}
	
Altogether, we have $ \expect{T_x} \le \left(\frac{en}{m}\right)^{m}\left(1+\frac {5m}{n}\ln(nR)\right)+o( (en/m)^m ).$
	
	Moreover,  
	the expected number of iterations for finding an improvement is at least
	$ p^{-m} \left(1-p\right)^{-(n-m)} $ for any mutation rate $p$. Using 
	the same arguments as in the analysis of the \ooea on \jump in~\cite{DoerrLMNGECCO17}, since $\frac{m}{n}$ is the unique minimum point in the interval $[0,1]$,
	\begin{align*}
	\expect{T_x}\geq\left(m/n\right)^{-m} \left(1-m/n\right)^{-(n-m)} \geq (en/m)^m \left(1-\frac{m^2}{n-m}\right).
	\qquad\Box\end{align*}
\end{proofwithoutbox}

We now present the above-mentioned 
important ``simulation result'' implying that on unimodal functions, 
the stagnation detection of \sdooea is unlikely ever to trigger a  
strength increase during its run. Moreover, for a wide range of runtime bounds 
obtained via the fitness level method \citep{WegenerMethods}, 
we show that these bounds transfer to 
the \sdooea up to vanishingly small error terms. The proof carefully estimates 
the probability of the strength ever exceeding~$1$.

\newcommand{\tsd}{T_{\text{sd}}}
\newcommand{\tclassic}{T_{\text{classic}}}

\begin{lemma} \label{lem:unimodal} 
	Let $f\colon\{0,1\}^n\to \R$ be a unimodal function and consider the \sdooea with $R\ge\card{\im(f)}$. Then, with probability $1-o(1)$, the 
	\sdooea never increases the strength  and behaves stochastically like the \oneoneea before finding an optimum of~$f$.

	Denote by $\tsd$ and $\tclassic$ the runtime of the \sdooea and 
	the classical \oneoneea with strength~$1$ on~$f$, respectively. If $U$ is an upper bound 
	on $\expect{T_{\tclassic}}$ obtained by summing up worst-case expected waiting times for improving 
	over all fitness values in $\im(f)$, then 
	\[
	\expect{\tsd} \le  U + o(1).
	\]
	
	The same statements hold with \sdooea replaced with \saonelambdaea, and \oneoneea replaced with 
	the self-adjusting \onelambdaea without stagnation detection.
\end{lemma}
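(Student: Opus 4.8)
The plan is to couple the \sdooea with the classical \ooea (running at strength~$1$) in the obvious way -- letting them make the same random choices, so that they evolve identically for as long as the \sdooea uses strength~$1$ -- and to show that on a unimodal~$f$ this coupling almost surely does not break before an optimum is found. The structural fact that makes this work is that unimodality forces $\gap(x)=1$ for every non-optimal~$x$: there is always a Hamming neighbour of strictly larger fitness, so at strength~$1$ each iteration flips exactly that bit and no other with probability at least $(1-1/n)^{n-1}/n\ge 1/(en)$, uniformly over all non-optimal current search points (in particular over the points reachable by neutral moves on a fitness plateau).

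First I would cut the run into \emph{levels}, \ie the maximal time intervals between two consecutive strict improvements. Since the \ooea is elitist, the fitness values of successive levels strictly increase, so there are at most $\card{\im(f)}\le R$ levels before the optimum is reached, and at the start of each level the strength equals~$1$ and the counter~$u$ equals~$0$. Within a level the counter merely increases by one per step, so the strength leaves the value~$1$ during a level if and only if no strict improvement occurs in the first $\lfloor 2en\ln(nR)\rfloor+1$ iterations of that level, which has probability at most $(1-1/(en))^{2en\ln(nR)}\le (nR)^{-2}$. A union bound over the at most $\card{\im(f)}\le R$ levels shows that the \sdooea ever increases its strength before finding an optimum with probability at most $R\cdot(nR)^{-2}\le 1/n^2=o(1)$. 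On the complementary event~$A$ the \sdooea coincides with the classical \ooea (both accept iff $f(y)\ge f(x)$ and both flip every bit with probability~$1/n$), which establishes the first claim.

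For the runtime statement, write $\expect{\tsd}=\expect{\tsd\indic{A}}+\expect{\tsd\indic{\overline A}}$. Under the coupling $\tsd\indic{A}=\tclassic\indic{A}\le\tclassic$ holds pointwise, hence $\expect{\tsd\indic{A}}\le\expect{\tclassic}\le U$ by the hypothesis on~$U$. For the error term I would split $\tsd=\sum_i T^{(i)}$ into the times $T^{(i)}$ spent on the individual levels (extending each summand by~$0$ if the level is not reached) and combine two facts: (i) Theorem~\ref{theo:gapx} with $\gap=1$ gives $\expect{T^{(i)}\mid\mathcal F}\le 2en\bigl(1+\tfrac{5}{n}\ln(nR)\bigr)=O(n+\ln(nR))$ for the filtration~$\mathcal F$ at the start of level~$i$; (ii) the per-level failure probability bound $(nR)^{-2}$ from the previous paragraph, which -- and this is the key point -- holds conditionally on the entire history up to the start of the level, because the strength is reset to~$1$ there, so the strong Markov property applies. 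Writing $B_j$ for the event that the strength increases during level~$j$ and noting $\overline A=\bigcup_j B_j$, one then bounds, for $j\neq i$, $\expect{T^{(i)}\indic{B_j}}\le(nR)^{-2}\cdot O(n+\ln(nR))$ (conditioning on the start of level~$i$ when $j<i$ and on the end of level~$i$ when $j>i$), while for $j=i$ one uses that on $B_i$ the strength-$1$ phase lasts the deterministic $\lfloor 2en\ln(nR)\rfloor+1$ steps and, by the reasoning of Theorem~\ref{theo:gapx} started from strength~$2$, the remaining time has conditional expectation $O(n)$, giving $\expect{T^{(i)}\indic{B_i}}=O(n\ln(nR))\cdot(nR)^{-2}$. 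Summing the $O(\card{\im(f)}^2)$ contributions and using $\card{\im(f)}\le R$ yields $\expect{\tsd\indic{\overline A}}=O(1/n)+O((\ln n)/n)=o(1)$, so $\expect{\tsd}\le U+o(1)$.

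The statement for \saonelambdaea follows the same scheme: couple with the plain self-adjusting \onelambdaea (State~2 of Algorithm~\ref{alg:sdonelambda}), show that on a unimodal~$f$ stagnation detection (State~1) is entered before an optimum only with probability $o(1)$, and then decompose the expected runtime exactly as above. The genuinely new ingredient -- and the step I expect to be the main obstacle -- is controlling the self-adjusted strength~$r_t$ while the algorithm is in State~2: one must show that $r_t$ stays small enough (say $r_t=O(1)$) that at a gap-$1$ search point a strict improvement is produced within $O(n/\lambda)$ generations with high probability, which is far below the State-$2$ counter threshold $2(en/r_t)^{r_t}\ln(nR)/\lambda$ that would switch on stagnation detection. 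Intuitively this should hold because on a unimodal function improvements keep occurring at strengths of order~$1$, so the two-rate self-adjustment is never driven into a regime where distance-$1$ steps become too unlikely; but making this precise, including the interaction with the neutral halving/doubling of the strength, is the delicate part. Once it is in place, the union bound over the at most $\card{\im(f)}$ levels and the use of Theorem~\ref{theo:gapx} carry over unchanged.
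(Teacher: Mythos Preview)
Your argument for the first claim (the probability-$1-o(1)$ statement) is essentially identical to the paper's: both invoke Lemma~\ref{lem:failure-probability} to bound the per-level failure probability by $(nR)^{-2}$ and take a union bound over at most $\card{\im(f)}\le R$ levels.

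For the runtime bound the two arguments differ in bookkeeping. The paper does not split into $\indic{A}$ and $\indic{\overline A}$; instead it writes
\[
\expect{\tsd}\le U+\sum_{x\in W}\expect{T_x},
\]
where $W$ is the (random) set of visited search points at which phase~$1$ fails. It then uses $\expect{\card{W}}\le R\cdot\Pr(E_1)\le R(nR)^{-2}$ together with the phase-by-phase bound $\expect{T_x\mid U_i}\Pr(U_i)=O((e/i)^i n^{2-i}\ln(nR))$ to sum the correction term directly to $o(1)$. Your coupling-plus-indicator decomposition reaches the same $o(1)$ via a double sum over pairs $(i,j)$ of levels, which is more explicit about the conditioning (and in that sense cleaner) but heavier. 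The paper's route avoids the cross terms entirely because it simply over-counts: every level contributes its fitness-level waiting time to~$U$, and the bad levels contribute an additional full $\expect{T_x}$.

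On the \saonelambdaea extension you are more candid than the paper. The paper disposes of it in one sentence (``the same arguments are used in the same way''), whereas you correctly flag that the nontrivial step is controlling the self-adjusted strength in State~2 so that the counter threshold is not crossed. Your sketch of why this should hold is reasonable, and the paper does not supply the missing details either.
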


\begin{proof}
  We let the random set~$W$ contain the search points from which the \sdooea does not find an improvement within phase~$1$ (\ie, while $r_t=1$).
	As above, $E_1$ denotes the probability of not finding an improvement within phase~$1$. 
	As on unimodal functions, the gap of all points is~$1$, we have by Lemma~\ref{lem:failure-probability} that 
	$\prob{E_1}\le \frac{1}{(Rn)^2}$. This argumentation holds for each improvement that has to be found. 
	Since at most $\card{\im(f)}\le R$ improving steps happen before 
	finding the optimum, by a union bound the probability of the \sdooea ever increasing the strength beyond~$1$ is at most 
	$R\frac{1}{(Rn)^2} = o(1)$, which proves the first claim of the lemma.
	
	To prove the second claim, we consider all fitness values $f_1<\dots<f_{\card{\im(f)}}$ in increasing 
	order and sum up upper bounds on the expected times to improve from each of these fitness values. Under the 
	condition that the strength is not increased before leaving a fitness level, the worst-case time to leave a level (over all 
	search points with the same fitness value) is clearly 
	not increased. Hence, 
	we 
	bound the expected optimization time of the \sdooea from above 
	by adding the waiting times on all fitness levels for the \oneoneea, which is given by $U$, and the expected times 
	spent to leave the points in~$W$; formally,  
	\[
	\expect{\tsd} \le U + \sum_{x\in W}\expect{T_x}. 
	\]
	
	%

	Each point in~$\im(f)$ contributes with probability $\prob{E_1}$ to~$W$. Hence , 
	$\expect{\card{W}}\le \im(f)\prob{E_1} \le R\prob{E_1}$. 
	As on unimodal functions, the gap of all points is~1, by 
	Lemma \ref{lem:failure-probability}, we have $\prob{U_i}< \prod_{j=1}^{i-1}\prob{E_j}<n^{2-2i}$. Hence,
	\begin{align*}
	\expect{\tsd} 
	& < U +\sum_{x\in W}\expect{T_x} \\
	& < U +R\cdot \prob{E_1}\sum_{i=1}^{n/2} \expect{T_x \mid U_i} \prob{U_i} \\
	& < U +R\cdot (nR)^{-2}\sum_{i=1}^{n/2} 
	O\left(\left(\frac{e}{i}\right)^i n^{2-i} \ln (nR)\right).
	\end{align*}
	
	The second term is $o(1)$, hence
	\begin{align*}
	\expect{T} \le U+o(1).
	\end{align*}
	as suggested.
	
	All the arguments are used in the same way 
	with respect to the  \saonelambdaea and its original formulation 
	without stagnation detection.
\end{proof}

\subsection{Analysis on \jump}

It is well known  that strength~$1$ for the \oneoneea leads to an expected runtime of $\Theta(n^m)$ on $\jump_m$ if $m\ge 2$ \citep{DrosteJW02}. 
The asymptotically 
dominating term comes from the fact that  $m$ bits must flip simultaneously to leave the local optimum at $n-m$ one-bits. To 
minimize the time for such an escaping mutation, mutation rate $m/n$ is optimal \citep{DoerrLMNGECCO17}, leading to an expected time of 
$(1+o(1))(n/m)^m (1-m/n)^{m-n}$ 
 to  optimize $\jump$, which is $\Theta((en/m)^m)$ for $m=o(\sqrt{n})$. However, a static rate of $m/n$ cannot be chosen without knowing the gap size~$m$. 
Therefore, different heavy-tailed mutation operators have been proposed for the \oneoneea \citep{DoerrLMNGECCO17, FriedrichQWGECCO18},  which
 most of the time choose strength~$1$ 
but also use strength~$r$, for arbitrary $r\in\{1,\dots,n/2\}$ 
with at least polynomial probability. This results 
in optimization times on \jump of $\Theta( (en/m)^m \cdot p(n))$ for some small polynomial $p(n)$ (roughly, $p(n)=\omega(\sqrt{m})$ 
in  \cite{DoerrLMNGECCO17} and $p(n)=\Theta(n)$ in \cite{FriedrichQWGECCO18}). Similar polynomial overheads occur with hypermutations 
as used in artificial immune systems \citep{CorusEtAlFastAISPPSN18}; in fact such overheads cannot be completely avoided 
with heavy-tailed mutation operators, as proved in \cite{DoerrLMNGECCO17}. We also remark
 that \jump can be optimized faster than $O((en/m)^m)$ if crossover is used \citep{WhitleyVHM18,RoweAFOGA19},  
by simple estimation-of-distribution algorithms \citep{DoerrGECCO19} or specific black-box algorithms \citep{BuzdalovDoerrKeverECJ16}. 
In addition, the optimization time of 
$n^{(m+1)/2}e^{O(m)}m^{-m/2}$ is shown for the  (1+$(\lambda,\lambda)$)~GA to optimize \jump with $2<m<n/16$ in \cite{AntipovDoerrKaravaev2020}.
All of this 
 is outside the scope of this study 
that concentrates on mutation-only algorithms.

We now state our main result, implying that the \sdooea achieves an asymptotically optimal 
runtime on $\jump_m$ for $m=o(\sqrt{n})$, hence being faster than the heavy-tailed mutations mentioned above. Recall that this does not 
come at a significant extra cost for simple unimodal functions like \om according to Lemma~\ref{lem:unimodal}.

\begin{theorem}
\label{theo:jump}
	Let $n\in \N$. For all $2\le m=O(n/\ln n)$, the expected runtime $\expect{T}$ of the \sdooea on $\jump_m$ satisfies 
	\[
	\Omega\left( \left(\frac{en}{m}\right)^m \left(1-\frac{m^2}{n-m}\right) \right) \leq \expect{T} \le O\left(\left(\frac{en}{m}\right)^m\right). 
	\]
\end{theorem}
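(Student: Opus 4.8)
The plan is to divide a run into a \emph{climb}, up to the first point of time~$\sigma$ at which the current search point has exactly $n-m$ one-bits (i.e.\ sits on the local optimum of~$\jump_m$), and an \emph{escape} from there to~$1^n$, and to reduce both bounds to Theorem~\ref{theo:gapx}. The two structural facts that make this work are that every $x$ with $\ones{x}<n-m$ has $\gap(x)=1$ while the plateau $\{\ones{x}=n-m\}$ has gap~$m$, and that after every strict improvement the stagnation counter and the strength are reset to $u=0$, $r=1$; consequently, whenever the algorithm enters a fitness level (in particular at $\sigma$), it does so with $u=0$, $r=1$, so Theorem~\ref{theo:gapx} applies there verbatim. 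I will also use that $R=\card{\im(\jump_m)}=n+1$, which together with $m=O(n/\ln n)$ makes the factor $1+\frac{5m}{n}\ln(nR)$ from Theorem~\ref{theo:gapx} equal to $O(1)$; hence $L_{x,1}=O(n)$ for every gap-one point and $L_{x,m}=O((en/m)^m)$ for the plateau.

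For the upper bound, condition on $\ones{x}<n-m$ at initialization, which fails only with probability $e^{-\Omega(n)}$. Since the fitness of the current search point is non-decreasing and can never enter the interval $n-m<\ones{x}<n$ (such moves strictly decrease fitness and are rejected), before time~$\sigma$ the algorithm only ever sits on gap-one points. Decomposing this phase into the at most $n$ epochs between consecutive strict improvements, each of expected length at most $\max_{x:\gap(x)=1}L_{x,1}=O(n)$ by Theorem~\ref{theo:gapx}, bounds the expected climb time by $O(n^2)$. At time~$\sigma$ the preceding step was a strict improvement, so $u=0$, $r=1$, and from~$x_\sigma$ the only improving move is to~$1^n$; thus the remaining time is $T_{x_\sigma}$ with $\gap(x_\sigma)=m$, of expectation $L_{x_\sigma,m}=O((en/m)^m)$. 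As $(en/m)^m=\Omega(n^2)$ for every $m\ge2$, this yields $\expect{T}=O((en/m)^m)$ under the conditioning; on the complementary $e^{-\Omega(n)}$-event the algorithm starts above the local optimum, and even the crude bound $O(n\cdot(en/m)^m)$ on the time from such a start makes its contribution~$o(1)$.

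For the lower bound it suffices to show that with probability $1-o(1)$ the algorithm reaches a point with $\ones{x}=n-m$ strictly before~$1^n$. The only way to avoid this is to jump in one step from some $x$ with $\ones{x}=j\le n-m-1$ directly to~$1^n$, which at any admissible strength $r\le n/2$ has probability $(r/n)^{n-j}(1-r/n)^{j}\le\max_{p\in[0,1/2]}p^{n-j}(1-p)^{j}\le\bigl((m+1)/n\bigr)^{m+1}$, using $n-j\ge m+1$ and that $p\mapsto p^{\ell}(1-p)^{n-\ell}$ is maximized at $p=\ell/n$. Combining this with the $O(n^2)$-bound on the expected number of steps before time~$\sigma$ (again via the epoch decomposition) and a Wald-type estimate for the number of such ``shortcut'' steps shows that the probability of ever taking a shortcut is $O\!\bigl((m+1)^{m+1}/n^{m-1}\bigr)=o(1)$ for $2\le m=O(n/\ln n)$; together with the $e^{-\Omega(n)}$ probability of starting above the local optimum this gives the claimed $1-o(1)$. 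On the remaining event, at time~$\sigma$ we have $u=0$, $r=1$ and $\gap(x_\sigma)=m$, so the remaining time is $T_{x_\sigma}$ and Theorem~\ref{theo:gapx} gives $\expect{T}\ge(1-o(1))\min_{x:\gap(x)=m}L_{x,m}\ge(1-o(1))(en/m)^m\!\left(1-\frac{m^2}{n-m}\right)$, as required (the bound being trivial when the second factor is non-positive).

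The main obstacle is the $1-o(1)$ claim that the algorithm reaches the local optimum, with a freshly reset stagnation counter, before~$1^n$: the per-step bound on a direct jump to~$1^n$ must be taken \emph{uniformly over every strength} the stagnation module might currently be using, and it has to be paired with a polynomial bound on the length of the climb. Everything else — the gap structure of~$\jump_m$, the epoch decomposition of the climb, and the verification that $1+\frac{5m}{n}\ln(nR)=O(1)$ precisely in the regime $m=O(n/\ln n)$ with $R=\Theta(n)$ — is routine.
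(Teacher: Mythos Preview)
Your proof is correct and follows the same high-level decomposition as the paper---climb to the plateau, then escape via Theorem~\ref{theo:gapx}---but differs in two places worth noting. For the climb, the paper invokes Lemma~\ref{lem:unimodal} (the ``simulation result'') to get $O(n\log n)$ and, implicitly, that the strength stays at~$1$ throughout with probability $1-o(1)$; you instead apply Theorem~\ref{theo:gapx} directly at gap~$1$ in an epoch decomposition, obtaining the coarser $O(n^2)$, which is still dominated by the escape. For the lower bound the situation is reversed: the paper simply applies the lower half of Theorem~\ref{theo:gapx} at the plateau without arguing that the plateau is actually reached, whereas you make this explicit via the uniform-over-strengths bound $((m+1)/n)^{m+1}$ on the probability of a direct jump to~$1^n$. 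That bound is correct, but your one-line justification (``max at $p=\ell/n$'') is incomplete: you also need that $((n-j)/n)^{n-j}(j/n)^j=e^{-nH((n-j)/n)}$ is monotone in $n-j$ on $[m+1,n/2]$, and that when $(n-j)/n>1/2$ the constrained maximum over $p\in[0,1/2]$ is just $(1/2)^n\le((m+1)/n)^{m+1}$.

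One small slip: for the complementary initialization event you claim its contribution to $\expect{T}$ is $o(1)$, but $e^{-\Omega(n)}\cdot n\cdot(en/m)^m$ need not be $o(1)$ when $m=\Theta(n/\ln n)$ since $(en/m)^m=e^{\Theta(n)}$ there. This is harmless for the upper bound, because from any starting point (including inside the gap, where $\gap(x)=1$) the same epoch argument already gives a conditional expectation of $O((en/m)^m)$; just drop the $o(1)$ claim and absorb the term directly.
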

                                                                                    
\begin{proofwithoutbox}
	It is well known that the \oneoneea with mutation rate $1/n$ finds the 
	optimum of the $n$-dimensional OneMax function in an expected number of at most $en\ln n-O(n)$ iterations. 
	
	Until reaching the plateau consisting of all points of $n-m$ one-bits, \jump is equivalent to \onemax; hence, 
	according to Lemma~\ref{lem:unimodal}, the expected time until \sdooea reaches the plateau is at most $O(n\ln n)$ (noting that this bound was obtained 
	via the fitness level method).
	
	Every plateau point $x$ with $n-m$ one-bits satisfies $\gap(x)=m$ according to the definition of \jump. 
	Thus, 	using Theorem~\ref{theo:gapx}, the algorithm finds the optimum within expected time
	\[
	\Omega\left( \left(\frac{en}{m}\right)^m \left(1-\frac{m^2}{n-m}\right) \right) \leq \expect{T_x} \le O\left(\left(\frac{en}{m}\right)^m\right).\]
	This 
	 dominates the expected time of the algorithm before the plateau point.
	
	Finally, 
	\begin{align*}
	\Omega\left( \left(\frac{en}{m}\right)^m \left(1-\frac{m^2}{n-m}\right) \right) \leq \expect{T} \le O\left(\left(\frac{en}{m}\right)^m\right).
	\qquad\Box\end{align*}
\end{proofwithoutbox}

It is easy to see (similarly to the analysis of Theorem \ref{theo:jump}) that for all $m=\Theta(n)$, the expected runtime $\expect{T}$ of the \sdooea on $\jump_m$ satisfies $\expect{T} =  O\,\left(\left(\frac{en}{m}\right)^m\ln n\right)$.

\subsection{General Bounds}

The \jump function only has one local optimum that usually has to be overcome on the way to the global optimum. We generalize 
the previous analysis to functions that have multiple local optima of possibly different gap sizes. As a special case, we can 
asymptotically recover the expected runtime on the \leadingones function in Corollary~\ref{cor:leadingones}.

\begin{theorem}
 \label{theo:general}
	The expected runtime of the  \sdooea on a pseudo-Boolean fitness function $f$ is at most 
	\[ 
	\expect{T \mid V_1,\dots,V_{n}} = O \left( \sum_{k=1}^{n} V_k L_k \right),
	\]
	where $V_k$ is the number of points~$x$ of $\gap(x)=k$ visited by the algorithm and $L_k:=\max\{L_{x,k}\mid x\in\{0,1\}^n\,\wedge\,\gap(x)=k\}$ with $L_{x,k}$ as defined in Theorem \ref{theo:gapx}.
   Moreover,
	\[
	\expect{T} = O\left( \sum_{k=1}^{n} \expect{V_k} L_k \right),
	\]
\end{theorem}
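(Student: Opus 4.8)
The plan is to split the run of the \sdooea into \emph{epochs} delimited by strict fitness improvements: epoch~$1$ starts at initialization, and each subsequent epoch starts in the iteration right after a strictly better search point was accepted. Within each epoch the expected number of steps is controlled by Theorem~\ref{theo:gapx}, and summing these contributions over all epochs will give the claimed bounds.

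First I would record the state of the algorithm at the start of an epoch: the strict-improvement branch of Algorithm~\ref{alg:sdoneone} resets the strength to $r_t=1$ and the unsuccessful-steps counter to $u=0$, so at the beginning of every epoch the algorithm is ``fresh''. Consequently, if $x$ denotes the current search point at the start of an epoch and $x$ is not a global optimum, then $k\coloneqq\gap(x)\in\{1,\dots,n\}$ is well defined, the future of the process is a function of $x$ alone, and by Theorem~\ref{theo:gapx} the epoch length has the law of $T_x$ with $\expect{T_x}=L_{x,k}\le L_k$. A structural point I would highlight is that the fitness values of successive epoch-start points strictly increase, so these points are pairwise distinct; hence the number of epochs is at most $\card{\im(f)}$, and for each $k$ the number of epochs whose start point has gap $k$ is at most the number of distinct visited points of gap $k$, which is at most $V_k$.

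Next I would carry out the summation. Let $N$ be the (a.s.\ finite) number of epochs, let $x_1,\dots,x_N$ be their start points, and $k_i\coloneqq\gap(x_i)$, so that $T=\sum_{i=1}^N \ell_i$ where $\ell_i$ is the length of epoch~$i$. For the unconditional bound, I would use that the event $\{i\le N\}$ (``epoch~$i$ occurs'') is measurable with respect to the algorithm's state at the start of epoch~$i$; the tower rule then gives $\expect{\indic{i\le N}\ell_i}=\expect{\indic{i\le N}\expect{\ell_i\mid x_i}}\le \expect{\indic{i\le N}L_{k_i}}$, and summing over $i\ge 1$ yields $\expect{T}\le \expect{\sum_{i=1}^N L_{k_i}}=\expect{\sum_{k=1}^n V_kL_k}=\sum_{k=1}^n\expect{V_k}L_k$. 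For the conditional bound, the same decomposition writes $T$ as a sum of $\sum_{k=1}^n V_k$ epoch lengths, of which $V_k$ have $k_i=k$ and hence expected length $\le L_k$; collecting terms gives $\expect{T\mid V_1,\dots,V_n}=O\!\left(\sum_{k=1}^n V_kL_k\right)$.

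I expect the main difficulty to be the bookkeeping around the random number of epochs, and especially the conditioning in the first displayed bound: since $V_1,\dots,V_n$ are determined only by the whole trajectory, one has to argue that conditioning on them does not inflate the individual epoch expectations beyond the constant absorbed into the $O$-notation (an optional-stopping / Wald-type argument handles the unconditional version cleanly, and the conditional version is the place where a little care is genuinely needed). A secondary technical point worth checking is that, because the \sdooea still accepts equal-fitness moves while the strength equals~$1$, the current point (and its gap) may change during the early part of an epoch before the strength is raised; one should make sure that applying Theorem~\ref{theo:gapx} at the epoch-start point still accounts for the whole epoch, or else charge such intermediate equal-fitness points to the appropriate $V_k$.
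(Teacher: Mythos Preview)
Your approach is essentially the same as the paper's: decompose the run into epochs between strict improvements, note the reset of strength and counter at each epoch start, bound each epoch by Theorem~\ref{theo:gapx}, and sum. The paper does this in three lines, writing $T=T_{x_1}+\dots+T_{x_m}$ for the trajectory of (strictly improving) search points and invoking the law of total expectation for the second display, without the optional-stopping/Wald scaffolding you set up.

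Two remarks. First, the subtleties you flag about conditioning on $(V_1,\dots,V_n)$ are real but are not addressed in the paper either; its conditional bound is asserted as directly as yours, so you are not missing an idea the paper supplies. Second, your concern about equal-fitness moves during phase~$1$ is a genuine gap that the paper's proof also glosses over: Theorem~\ref{theo:gapx} is stated for a fixed current point~$x$, yet the algorithm may drift on a plateau while $r_t=1$ to a point of different gap before the strength increases. The paper implicitly treats the epoch-start point as determining the whole epoch; in applications (Corollaries~\ref{cor:leadingones} and the \trap bound) all non-optimal points on a fitness level share the same gap, so the issue does not bite there, but your instinct that this deserves a sentence of justification is correct.
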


\begin{proof}
	The \sdooea visits a random trajectory of search points $\{x_0,x_1,x_2,\dots,x_m=x^*\}$ in order to find an optimum point~$x^*$.

	For any search point $x$ with $\gap(x)=k$, the expected time to find a better search point when $r\le m$ is $\expect{T_x} = L_k $
	according to Theorem~\ref{theo:gapx}.
	
	Also, we have
	$T=T_{x_1}+T_{x_2}+\dots+T_{x_m}=\sum_{k=1}^{n} V_k \cdot (T_x \mid \gap(x)=k)$. 
	 Therefore, as the strength $r$ is reset to~$1$ after each improvement, we have
	\[
	\expect{T\mid V_1,\dots,V_n} = O\left( \sum_{k=1}^{n} V_k L_k \right),
	\]
	which proves the first statement of this theorem. The second follows by the law of total expectation.
\end{proof}

\begin{corollary} \label{cor:leadingones}
	The expected runtime of the \sdooea on \leadingones is at most $O(n^2)$.
\end{corollary}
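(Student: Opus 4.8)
The plan is to apply Theorem~\ref{theo:general}, which bounds $\expect{T}$ by $O\bigl(\sum_{k=1}^n \expect{V_k} L_k\bigr)$, and to exploit the very simple gap structure of \leadingones. First I would observe that every non-optimal search point~$x$ of \leadingones has $\gap(x)=1$: if $f(x)=i<n$, then bits $1,\dots,i$ of~$x$ are ones and bit $i+1$ is a zero, so flipping bit~$i+1$ yields a point~$y$ with $H(x,y)=1$ and $f(y)\ge i+1>f(x)$. Consequently only the term $k=1$ survives in the sum, i.e.\ $V_k=0$ for all $k\ge 2$, and it remains to control $\expect{V_1}$ and $L_1$.

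Next I would bound the two factors separately. Since \leadingones takes integer values in $\{0,\dots,n\}$ and each strict improvement raises the fitness by at least~$1$, a run performs at most~$n$ strict improvements; hence the number of (necessarily gap-$1$) points on the post-improvement trajectory $x_0,x_1,\dots,x_m$ from the proof of Theorem~\ref{theo:general} satisfies $V_1\le n+1$, so $\expect{V_1}=O(n)$. For $L_1$, I would invoke Theorem~\ref{theo:gapx} with $m=\min\{1,n/2\}=1$, which gives $L_{x,1}\le 2en\bigl(1+\tfrac{5}{n}\ln(nR)\bigr)=2en+O(\ln n)=O(n)$ for the recommended polynomial choice of~$R$ (note $R\ge\card{\im(f)}=n+1$ is itself polynomial); taking the maximum over all~$x$ with $\gap(x)=1$ yields $L_1=O(n)$. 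Plugging these estimates into Theorem~\ref{theo:general} gives $\expect{T}=O(\expect{V_1}\cdot L_1)=O(n\cdot n)=O(n^2)$.

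There is essentially no hard step; the only point that needs a moment's care is the interpretation of $V_1$, which counts the points of gap~$1$ along the trajectory of post-improvement search points, not every point the algorithm ever accepts. In particular, equal-fitness moves on a \leadingones ``plateau'' do not inflate $V_1$: the time spent wandering among such points before the next strict improvement is already absorbed into the bound $L_{x,1}$ from Theorem~\ref{theo:gapx} (whose underlying per-step improvement probability $\ge(1/n)(1-1/n)^{n-1}$ holds uniformly over all search points of a given \leadingones fitness level). As an aside, since \leadingones is unimodal one could alternatively obtain the same $O(n^2)$ bound directly from Lemma~\ref{lem:unimodal}, using that the classical \oneoneea optimizes \leadingones in expected time $O(n^2)$ via the fitness-level method; deriving it through Theorem~\ref{theo:general} is preferable here only because it showcases the general bound on a concrete example.
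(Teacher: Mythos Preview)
Your proposal is correct and follows essentially the same approach as the paper: observe that every non-optimal \leadingones point has gap~$1$, bound the number of such points on the trajectory by $O(n)$, bound $L_1=O(n)$, and invoke Theorem~\ref{theo:general}. The paper's proof is a one-line version of this, and it likewise remarks afterwards that Lemma~\ref{lem:unimodal} gives an alternative route; your added discussion of why equal-fitness moves do not inflate $V_1$ is a helpful clarification beyond what the paper spells out.
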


\begin{proof}
	On \leadingones, there are at most $n$ points of gap size~$1$, so according to 
	Theorem~\ref{theo:general}, the expected runtime is $O(n^2)$.
\end{proof}

Corollary~\ref{cor:leadingones} can be also inferred from Lemma~\ref{lem:unimodal} since \leadingones is unimodal and the $O(n^2)$ bound 
was inferred via the fitness level method.

We finally specialize Theorem~\ref{theo:general} into a result for the well-known \trap function \cite{DrosteJW02} that 
is identical for $\onemax$ except for the all-zeros string that has optimal fitness $n+1$. 
We obtain a bound of $2^{\Theta(n)}$ instead of the $\Theta(n^n)$ bound for the classical \oneoneea. 
The base of our result is somewhat larger than for the fast~GA from 
\cite{DoerrLMNGECCO17}; however, it is still close to the $2^n$ bound  that would be obtained by uniform search.

\begin{corollary}
	The expected runtime of \sdooea on \trap is at most $O(2.34^n\ln n)$.
\end{corollary}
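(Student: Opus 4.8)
The plan is to read the runtime off Theorem~\ref{theo:general} once the gap landscape of \trap is understood. First I would note that on \trap every point $x$ with $1\le\ones{x}\le n-1$ has a Hamming neighbour of strictly larger fitness --- flipping a single $0$-bit raises the value by one --- so $\gap(x)=1$ for all such $x$; the all-ones string is the unique point with $\gap(1^n)=n$, since its only strict improvement is the global optimum $0^n$, at Hamming distance~$n$; and $0^n$ itself is optimal. Hence in Theorem~\ref{theo:general} only $V_1$ and $V_n$ can be positive, giving $\expect{T}=O(\expect{V_1}L_1+\expect{V_n}L_n)$.

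Next I would bound the two summands. For the $V_n$-term: once the algorithm sits on $1^n$, every mutation other than the full flip to $0^n$ strictly decreases fitness and is rejected, so $1^n$ is visited at most once and $\expect{V_n}\le 1$; moreover Theorem~\ref{theo:gapx} applied with $k=n$, hence $m=\min\{n,n/2\}=n/2$, yields $L_n\le 2(2e)^{n/2}\bigl(1+\tfrac52\ln(nR)\bigr)$. With the recommended polynomial choice of~$R$ (so $\ln(nR)=\Theta(\ln n)$) and since $\sqrt{2e}=2.331\ldots<2.34$, we have $(2e)^{n/2}=(\sqrt{2e})^n\le 2.34^n$, so $L_n=O(2.34^n\ln n)$. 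For the $V_1$-term: accepted moves never decrease fitness and, on the non-zero part, fitness equals $\ones{x}$, so the trajectory passes monotonically through the fitness levels $1,\dots,n-1$; at strength~$1$ a single-bit flip out of a $k$-one string is always either strictly improving or rejected, so moving to another point of the same level requires flipping at least two bits, an event whose probability is only $O(1)$ times that of an improving step. Thus the expected number of points visited per level is $O(1)$, $\expect{V_1}=O(n)$, and with $L_1\le 2en\bigl(1+\tfrac5n\ln(nR)\bigr)=O(n)$ we get $\expect{V_1}L_1=O(n^2)$. Combining, $\expect{T}=O(n^2+2.34^n\ln n)=O(2.34^n\ln n)$.

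The only genuinely delicate point is the estimate of $L_n$: because the strength is capped at $n/2$, the time charged to leaving $1^n$ is dominated not by the naive $2^n$ steps needed to flip all bits at rate~$1/2$, but by the geometric-type ramp-up $\sum_{r<n/2}2(en/r)^r\ln(nR)=\Theta\bigl((2e)^{n/2}\ln(nR)\bigr)$ via Lemma~\ref{lem:partial-sum}, and one has to verify $\sqrt{2e}<2.34$ to obtain the claimed base. A secondary thing to keep in mind is that \trap and \onemax disagree only in that a mutation hitting $0^n$ is accepted on \trap; this can only speed the algorithm up (in particular it may reach $0^n$ before ever seeing $1^n$, in which case $V_n=0$), so it never threatens the upper bound.
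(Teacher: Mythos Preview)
Your argument is correct and follows exactly the paper's route: apply Theorem~\ref{theo:general} with $V_n\le 1$, $V_1=O(n)$, $L_1=O(n)$, and $L_n\le 2(2e)^{n/2}\bigl(1+\tfrac52\ln(nR)\bigr)=O(2.34^n\ln n)$; your explicit identification that the base $\sqrt{2e}\approx 2.33$ arises because the ramp-up through strengths $r<n/2$ dominates the $2^n$ cost at strength~$n/2$ is a detail the paper leaves implicit. One simplification: since the trajectory in Theorem~\ref{theo:general} records only strict fitness improvements, the bound $V_1\le\card{\im(f)}=O(n)$ is immediate from the number of fitness levels, so your discussion of equal-fitness moves at strength~$1$ is not needed.
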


\begin{proof}
	On \trap, 
	there are one point of gap size~$n$ and $O(n)$ points with gap size of $1$. So 
	according to Theorem \ref{theo:general}, the expected runtime is $O\left((2.34)^{n}\ln n\right)$.
\end{proof}

\section{An Example Where Self-Adaptation  Fails}
\label{sec:needhighmut}

While our previous analyses have shown the benefits of the self-adjusting scheme, in particular 
highlighting stagnation detection on multimodal functions, it is clear that our scheme also has 
limitations. In this section, we present an example of a pseudo-Boolean function where stagnation 
detection does not help to find its global optimum in polynomial time; moreover, the function 
is hard for other self-adjusting schemes since measuring the number of successes does not hint 
on the location of the global optimum. In fact, the function demonstrates a more general effect 
where the behavior is very sensitive with respect to choice of the the mutation probability. More precisely, 
a plain \oneoneea with mutation probability~$1/n$ with overwhelming probability 
gets stuck in a local optimum from which 
it needs exponential time to escape while the \oneoneea with mutation probability $2/n$ and also 
above finds the global 
optimum in polynomial time with overwhelming probability. Since the function is unimodal except at the local 
optimum, our self-adjusting \oneoneea with stagnation detection fails as well. 

To the best of our knowledge, a phase 
transition with respect to the mutation probability  
where an increase by a small constant factor leads from exponential to polynomial 
optimization time 
has been unknown in the literature of runtime analysis so far and may be of independent interest. We are 
aware of opposite phase transitions on monotone functions \citep{LenglerPPSN18} where increasing the mutation rate
is detrimental; however, we feel that our function and the general underlying construction principle 
are easier to understand 
than these specific monotone functions.

The construction of our function, called \needhighmut, is based on a general principle that was introduced
 in \cite{WittCEC03} to show the benefits of populations and was subsequently applied in \cite{JansenWiegandECJ04} 
to separate a coevolutionary variant of the \oneoneea from the standard \oneoneea. Section~5 of the latter paper 
also beautifully describes the general construction technique that involves creating two differently pronounced 
gradients for the algorithms to follow. 
Further applications are given in \cite{WittECJ06}  and \cite{WittTCS08}   
to show the benefit of populations in elitist and non-elitist EAs.  Also \cite{RohlfshagenLehreYaoGECCO09} use very similar 
construction technique 
for their \textsc{Balance} function that is easier 
to optimize in frequently changing than slowly changing environments; however, they did not seem to be aware 
that  their approach resembles earlier work from the papers above.

We now describe the construction of our function \needhighmut. The crucial observation is that strength $1$ (\ie, 
probability~$p=1/n$)  
makes it more likely to flip exactly one specific bit than strength~$2$ -- in fact strength~$1$ 
is asymptotically 
optimal since the probability of flipping one specific bit is $p(1-p)^{n-1}\approx p e^{-pn}$, which is maximized 
for $p=1/n$. However, to flip specific two bits,  
which has probability $p^2(1-p)^{n-2} \approx p^2 e^{-pn}$, the choice $p=2/n$ 
is asymptotically optimal and clearly better than $1/n$. Now, given a hypothetical time span of $T$, 
we expect approximately $T_1(p)\coloneqq T pe^{-p/n}$ specific one-bit and $T_2(p)\coloneqq T p^2e^{-p/n}$ specific two-bit 
flips. Assuming the actual numbers to be concentrated and just arguing with expected values, 
we have $T_1(1/n) \gg T_2(1/n)$ but $T_2(2/n) \gg  T_1(2/n)$, \ie, there will be 
considerably more two-bit flips 
at strength~$2$ than at strength~$1$ and considerably less $1$-bit flips. The fitness function will account for this. It  
leads to a trap at a local optimum if a certain number of one-bit flips is exceeded before a certain minimum number of
 two-bit flips has happened; however, if the number 
of one-bit flips is low enough before the minimum number of two-bit flips has been reached, the process is 
on track to the global optimum.

We proceed with the formal definition of \needhighmut, making these ideas precise and 
overcoming technical hurdles. Since 
we have at most $n$ specific one-bit flips but a specific two-bit flip 
is already by a factor of $O(1/n)$ less likely than a one-bit flip, we will work with two-bit flips 
happening in small blocks of size~$\sqrt[4]{n}$, leading to a probability of roughly $n^{-3/2}$ 
for a two-bit flip in a block. 
In the following, we will imagine a bit string $x$ of length~$n$ 
as being split into a prefix $a\coloneqq a(x)$ of length~$n-m$ and a suffix $b\coloneqq b(x)$ of 
length $m$, where $m$ still has to be defined.
 Hence, $x=a(x)\circ b(x)$, where $\circ$ denotes the concatenation.

The prefix~$a(x)$ is called \emph{valid} if it is of the form 
$1^i 0^{n-m-i}$, \ie, $i$ leading ones and $n-m-i$ trailing zeros. The prefix fitness 
$\pre(x)$ of a string~$x\in\{0,1\}^n$ with valid prefix 
$a(x)=1^i0^{n-m-i}$ equals just~$i$, the 
number of leading ones. The suffix consists of $\lceil \frac{2}{3}\xi\sqrt{n}\rceil$, where $\xi\ge 1$ is a parameter of the 
function, consecutive blocks of 
$\lceil n^{1/4}\rceil$ bits each, altogether $m\le \xi\frac{2}{3}n^{3/4}=o(n)$ bits. 
Such a block is called \emph{valid} if it contains either~$0$ or $2$ one-bits; moreover, it is called \emph{active} if it contains~$2$ 
and \emph{inactive} if it contains~$0$ one-bits. A suffix where all blocks are valid and where all blocks following 
first inactive block are also inactive is called valid itself, and the suffix fitness $\suff(x)$ of a 
string~$x$ with valid suffix~$b(x)$  
is the number of leading active blocks before the first inactive block. Finally, we call a string $x\in\{0,1\}^n$ valid 
if both its prefix and suffix are valid.

Our final fitness function is a weighted combination of $\pre(x)$ and $\suff(x)$. We define 
for $x\in\{0,1\}^n$, where $x=a\circ b$ with the above-introduced $a$ and~$b$, 
\begin{align*}
& \needhighmut_\xi(x)\coloneqq \\
& \begin{cases}
 n^2 \suff(x) + \pre(x) & \text{\quad if $\pre(x)\le \frac{9(n-m)}{10}$ $\wedge$ $x$ valid}\\
 n^2 m + \pre(x) + \suff(x) - n - 1 \hspace{-2ex}& \text{\quad if $\pre(x)>\frac{9(n-m)}{10}$ $\wedge$ $x$ valid}\\
- \onemax(x) & 
  \text{\quad otherwise.} 
\end{cases}
\end{align*}

We note that all search points in the second case have a fitness of at least~$n^2 m - n - 1$, which 
is bigger than $n^2(m-1) + n$, an upper bound on the fitness of search points that fall into the first case 
without having $m$ leading active blocks in the suffix. Hence, search points~$x$ where $\pre(x)=n-m$ and 
$\suff(x)=\lceil \frac{2}{3}\xi\sqrt{n}\rceil$ represent local optima of second-best overall fitness. The set of global optima 
equals the points where $\pre(x)=9(n-m)/10$ and $\suff(x)=m$, which implies that $(n-m)/10=\Omega(n)$ bits have to be flipped 
simultaneously to escape from the local toward the global optimum.

The parameter $\xi\ge 1$ controls the target strength that allows the algorithm to find the global optimum 
with high probability. In the simple setting  $\xi=1$, strength~$1$ 
usually leads to the local optimum first while strengths above $2$ usually lead directly to the global optimum. Using larger 
$\xi$ increases the threshold for the strength necessary to find the global optimum instead of being trapped in the local
one.

We now formally show with respect to different algorithms that \needhighmut is challenging to optimize 
without setting the right mutation probability in advance. We start with an analysis of the classical \ooea, 
where we for simplicity only show the negative result for $p=1/n$ even though it would even hold 
for $\xi/n$.

\begin{theorem}
\label{theo:plainooea-needhighmut}
Consider the plain \ooea with mutation probability~$p$ on $\needhighmut_\xi$ for a constant $\xi\ge 1$. 
If $p=1/n$ then with probability $1-2^{-\Omega(n)}$, its optimization time is $n^{\Omega(n)}$. If $p = (c\xi)/n$ for any constant 
$c\ge 2$ then the optimization time is $O(n^2)$ with probability $1-2^{-\Omega(\sqrt{n})}$.
\end{theorem}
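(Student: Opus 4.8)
The plan is to analyze the two regimes separately, in both cases tracking the race between one-bit flips in the prefix (which drive $\pre(x)$ upward) and two-bit flips inside the suffix blocks (which activate blocks and drive $\suff(x)$ upward). The key quantitative fact, established in the construction, is that at strength $1$ a specific one-bit flip has probability $\Theta(1/n)$ while a two-bit flip in a fixed block of size $\Theta(n^{1/4})$ has probability $\Theta(n^{-3/2})$, so one-bit flips outpace block activations by a factor $\Theta(\sqrt n)$; at strength $(c\xi)/n$ with $c\ge 2$, one-bit flips have probability $\Theta(1/n)$ still but the constant in the exponent $e^{-c\xi}$ is hurt, whereas two-bit flips in a block now have probability $\Theta(n^{-3/2})$ with a larger constant, and crucially the \emph{ratio} of two-bit-flip rate to one-bit-flip rate is boosted by a factor $\Theta(c\xi)$. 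I would make this precise by computing, for a hypothetical window of $T$ steps, the expected numbers $T_1$ and $T_2$ of the two event types and then using Chernoff bounds to argue concentration.

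For the case $p=1/n$: I would argue that before $\suff(x)$ reaches $m$, with overwhelming probability $\pre(x)$ has already crossed the threshold $\tfrac{9(n-m)}{10}$. Concretely, each block activation requires a two-bit flip in that block while it is still reachable, and the total number of block activations needed is $m=\Theta(n^{3/4})$; the expected time for that is $\Theta(m\cdot n^{3/2})=\Theta(n^{9/4})$, but in $o(n^{9/4})$ steps the prefix, behaving essentially like \leadingones with one-bit improvements of probability $\Theta(1/n)$, will have advanced by $\omega(n)$ improving one-bit flips — more than enough to pass $\tfrac{9(n-m)}{10}(n-m)^{-1}\cdot(n-m)$. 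Once $\pre(x)>\tfrac{9(n-m)}{10}$ with the suffix not yet complete, the definition of \needhighmut puts the search point into the second case with fitness $\ge n^2 m-n-1$, which (by the remark in the text) strictly exceeds every first-case fitness with incomplete suffix; so the algorithm is trapped at a local optimum, and escaping it requires flipping $(n-m)/10=\Omega(n)$ bits simultaneously, which has probability $n^{-\Omega(n)}$ per step. A union bound over any polynomial horizon then gives optimization time $n^{\Omega(n)}$ with probability $1-2^{-\Omega(n)}$. The one technical point is handling acceptance of equal-fitness moves and ensuring the prefix cannot be "accidentally" invalidated in a way that rescues the algorithm; since invalid strings have fitness $-\onemax(x)<0$ they are never accepted from a valid point, so the trajectory stays within valid strings and the analysis is clean.

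For the case $p=(c\xi)/n$, $c\ge 2$: here I would show the opposite race outcome. Now two-bit flips in a block occur at rate $\Theta(n^{-3/2})$ with the constant $c\xi e^{-c\xi}\cdot(\text{something})$ — importantly, over a window of length $T=\Theta(n^2)$ the expected number of successful block activations is $\Theta(T\cdot n^{-3/2}\cdot n^{1/4})=\Theta(T n^{-5/4})=\omega(m)$, so the suffix can be completed; meanwhile the number of one-bit prefix improvements in the same window has expectation $\Theta(T/n)=\Theta(n)$, and I must check this stays below $\tfrac{9}{10}(n-m)$. This is the delicate balance the parameter $\xi$ is tuned for: with $c\ge 2$ the ratio of block-activation rate to prefix-improvement rate is large enough (by a factor $\Theta(c\xi)$ over the strength-$1$ ratio, after accounting for the $e^{-c\xi}$ penalty which is a constant) that with high probability all $m$ blocks activate before the prefix crosses the threshold; I would formalize this via a potential/drift argument comparing the two counts, or more simply by a Chernoff bound showing each count is concentrated around its mean and the means are separated. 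Once $\suff(x)=m$ is reached while $\pre(x)\le\tfrac{9(n-m)}{10}$, the point has fitness $n^2 m + \pre(x)$ and a single further one-bit flip cannot decrease $\suff$; then standard \leadingones-type analysis completes the prefix up to exactly $9(n-m)/10$ — note $\pre$ only increases by accepted one-bit flips and the fitness in the first case is strictly increasing in $\pre$ up to the threshold — reaching a global optimum in total $O(n^2)$ steps, with the failure probability $2^{-\Omega(\sqrt n)}$ coming from the Chernoff bounds over blocks of size $\Theta(n^{1/4})$ (hence $\Theta(\sqrt n)$ block activations, each a rare-event count).

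The main obstacle I anticipate is the $p=(c\xi)/n$ direction: one must simultaneously lower-bound the number of successful block activations and upper-bound the number of prefix improvements over the \emph{same} random time window, and these counts are not independent (they compete for the same steps), so a naive union of Chernoff bounds needs care — I would likely condition on the sequence of "step types" or use a coupling that processes steps one at a time, charging each accepted one-bit prefix flip and each block activation separately, to keep the two tallies on a common probability space. Ensuring the constant $c\ge 2$ (rather than merely $c>1$) is exactly what makes the separation of means robust against the $e^{-c\xi}$ factor and the $O(m/n)$ lower-order corrections, so the choice of constant must be threaded through the Chernoff estimates rather than hidden in asymptotics.
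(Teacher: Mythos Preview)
Your high-level strategy matches the paper's: analyze the race between prefix one-bit flips and suffix block activations, and use Chernoff bounds to show concentration. However, the quantitative analysis has a genuine gap that undermines the argument.

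The central error is conflating the suffix \emph{bit length} $m\le \tfrac{2}{3}\xi n^{3/4}$ with the \emph{number of blocks}, which is only $\lceil\tfrac{2}{3}\xi\sqrt{n}\rceil$. When you write ``the total number of block activations needed is $m=\Theta(n^{3/4})$'' and conclude the suffix needs time $\Theta(n^{9/4})$, you overestimate by a factor $n^{1/4}$. With the correct count of $\Theta(\sqrt{n})$ blocks, the suffix needs time $\Theta(n^2)$ at strength~$1$, and in that same window the prefix makes $\Theta(n)$ one-bit improvements. So the race is \emph{not} asymptotically lopsided as you suggest (``advanced by $\omega(n)$ improving one-bit flips''): both sides finish in $\Theta(n^2)$ steps, and the winner is determined by \emph{constants}. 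This is precisely why the function carries the threshold $\tfrac{2}{3}\xi\sqrt{n}$ and why the paper's proof tracks explicit constants such as $\tfrac{11}{10}e$ and $\tfrac{3}{5}\sqrt{n}$ rather than hiding them in $\Theta$-notation. Your Chernoff bounds would need to separate two expectations that differ only by a constant factor, which your write-up does not attempt.

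The same issue bites in the $p=(c\xi)/n$ case. The expression $\Theta(T\cdot n^{-3/2}\cdot n^{1/4})$ contains a spurious $n^{1/4}$ factor: only the \emph{leftmost} inactive block can be activated (validity of the suffix requires leading active blocks), so the per-step activation rate is $\Theta(n^{-3/2})$, not $\Theta(n^{-5/4})$. With the correct rate, the expected number of activations in $T=\Theta(n^2)$ steps is $\Theta(\sqrt{n})$, again matching the target $\tfrac{2}{3}\xi\sqrt{n}$ only up to constants. The role of the hypothesis $c\ge 2$ is exactly to make these constants come out right; your sketch does not show where $c\ge 2$ is used quantitatively.

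Two smaller omissions, both handled in the paper: you need a union bound over mutations that activate $k\ge 2$ blocks simultaneously (these contribute a lower-order $n^{o(1)}\cdot\sqrt{n}$ term but must be controlled), and you need to argue that the first valid search point reached has small $\pre$- and $\suff$-values so that the race starts essentially from zero.
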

\begin{proof}
It is easy to see (similarly to the analysis of the \textsc{SufSamp} function from \cite{JansenDW05}) 
that the first valid search point (\ie, search point of non-negative fitness) has both  $\pre$- and $\suff$-value 
value of at most~$n^{1/3}$ with probability $2^{-\Omega(n^{1/3})}$. This follows from the fact that the function is symmetric on 
invalid search points and that from each level set of $i$ one-bits, only $O(1)$ search points are valid. In the following, 
we tacitly assume that we have reached a valid search point of the described maximum $\pre$- and $\suff$-value and note that 
this changes the required number of improvements to reach local or global maximum only by a $1-o(1)$ factor. For readability 
this factor will not be spelt out any more.

We prepare the main analysis by bounding 
the probability of a mutation being accepted after a valid search point has been reached. Even if a mutation changes up to $o(n)$ consecutive bits of the prefix 
or suffix, it must maintain $n-o(n)$ prefix bits in order to result in a valid search points. Hence, the probability of an accepted step 
at mutation probability $c/n$ (valid for any constant~$c$) is at most $(1-c/n)^{n-m-o(n)} = (1+o(1)) e^{-c}$. Steps flipping $\Omega(n)$ consecutive bits 
have probability $n^{-\Omega(n)}$ and are subsumed by the failure probabilities stated in this theorem. Clearly, 
the probability of a accepted step is at least $(1-1/n)^{n} = (1-o(1)) e^{-c}$. 

Using this knowledge of accepted steps, we shall now prove the statement for $p=1/n$. The probability of improving the $\pre$-value 
is at least $e^{-1}/n$ since 
it is sufficient to flip the leftmost zero of the prefix to~$1$. 
In a phase of length $\frac{11}{10}e mn$ steps, there are at least $m$ prefix-improving mutations
with probability $1-2^{-\Omega(n)}$ by Chernoff bounds. All these improve the function value and are accepted unless the $\suff$-value 
increases to~$m$ before the $\pre$-value exceeds $9n/10$. 

The probability of improving the leftmost inactive block of the suffix by~$1$ is at most 
$\binom{n^{1/4}}{2} \frac{1}{n^2} e^{-1}(1+o(1)) \le (1+o(1)) (e^{-1}/2) n^{-3/2}$ 
since it is necessary to flip two zeros into ones and to have an accepted mutation. 
By the same reasoning, steps that activate $k=o(n)$ blocks simultaneously  
have a probability of at most $(1+o(1)) (e^{-1}/2 n^{-3/2})^k$. We consider a phase of~$s\coloneqq \frac{11}{10}emn$ 
steps and 
bound the number of number of accepted steps increasing the $\suff$-value by~$k$ 
by applying Chernoff bounds since this number if bounded by a binomial distribution with 
parameter $s$ and $p_k\coloneqq (1+o(1)) (e^{-1}/2 n^{-3/2})^k$. Hence, the number 
of accepted steps activating one suffix block in  
in $\frac{11}{10}emn \le \frac{11}{10}en^2$ steps is less than $\frac{3}{5}\sqrt{n}$ 
with probability $1-2^{-\Omega(\sqrt{n})}$. The expected number of accepted steps activating 
$k\ge 2$ suffix blocks is already $O(n^{-1/2})$, and by Chernoff bounds the actual number is 
at most $n^{1/3}$ with probability $1-2^{-\Omega(n^{1/3})}$. Hence, by a union bound 
over~$k\in\{2,\dots,n^{1/9}\}$, the steps adding more than one valid suffix block 
increase the $\suff$-value by at most $n^{1/3+1/9}=n^{4/9}$ with probability~$1-2^{-\Omega(n^{1/3})}$. 
Steps adding $k>n^{1/9}$ valid blocks have probability $O(2^{-\Omega(n^{1/9})})$ and are subsumed 
by the failure probability. 
If none of the failure events occurs, the total increase of the $\suff$-value is at most $\frac{3}{5}\sqrt{n}+n^{4/3} < \frac{2}{3}\sqrt{n}$. 
Also, with probability $1-2^{-\Omega(\sqrt{n})}$, the $\pre$-value decreases by altogether at most $O(\sqrt{n})$ in the $O(\sqrt{n})$ 
mutations that improve the suffix, which can be subsumed in a lower-order term in the above analysis of 
$\pre$-improving steps.

Altogether, with overwhelming probability $1-2^{-\Omega(n^{1/9})}$ the prefix is optimized before the suffix. The probability 
of reaching the global optimum from the local one is $n^{-\Omega(n)}$ since it is necessary to 
flip $m/10$ bit simultaneously to leave the local optimum. In a phase of $n^{c'n}$ steps for a
sufficiently small constant $c'$ this does not happen with probability $1-2^{-\Omega(n)}$. This completes the proof 
of the statement for the case~$p=1/n$.

For $p=c/n$, where $c\ge 2\xi$, we argue similarly with inverted roles of prefix and suffix.
 The probability 
of activating a block in the suffix is at least 
$(1-o(1)) ((c^2/2) e^{-c} n^{-3/2})$ now. 
 In a phase 
of $(7/4) \xi (e^2/c^2)  mn$ steps, we expect $(7/8) \xi \sqrt{n}$ activated blocks and with overwhelming probability 
we have at least $\frac{2}{3}\xi\sqrt{n}$ such blocks. The probability of improving the $\pre$-value by~$k$  
is only $(1+o(1))c e^{-c}/n^k$, amounting to an expected number of 
  improvements by~$1$ of $(1+o(1))(7/4) (\xi/c) m n^{1-k} = 
	(1+o(1))(7/4) (\xi/c)  n^{2-k} \le (1+o(1)) (7/8)n^{2-k} $ since $c\ge 2\xi$, 
and, using similar Chernoff and union bounds as above, the probability of 
at least $(9/10)m$ $\pre$-improving steps in the phase is $2^{-\Omega(n^{1/3})}$. 
\end{proof}

The previous analysis can be transferred to the \sdooea with stagnation detection, showing that 
this mechanism does not help to increase the success probability significantly compared to the plain \ooea with $p=1/n$. The proof shows that the \sdooea with high probability does not behave differently from the \ooea. The only
major difference is visible after reaching the local optimum 
of \needhighmut, 
where stagnation detection kicks in. This results in the 
bound $2^{\Omega(n)}$ in the following theorem, compared to 
$n^{\Omega(n)}$ in the previous one.

\begin{theorem}
\label{theo:sdooea-fail}
With probability at least $1-O(1/n)$, the \sdooea needs at least $2^{\Omega(n)}$ steps to optimize 
$\needhighmut_\xi$ for $\xi\ge 1$.
\end{theorem}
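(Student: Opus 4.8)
The plan is to show, first, that with probability $1-O(1/n)$ the \sdooea follows the trajectory of the plain \ooea with $p=1/n$ until it sits at a local optimum of \needhighmut, and second, that from such a local optimum the algorithm needs $2^{\Omega(n)}$ further steps before the global optimum can be reached.

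For the first part I would argue that the gap equals~$1$ everywhere on the way to the local optimum, apart from one transient sub-phase. On invalid search points the fitness is $-\onemax(x)$ and every valid point is strictly better, so the gap is~$1$; on valid points with an incomplete prefix, flipping the leftmost prefix zero is a strictly improving Hamming neighbour, so again the gap is~$1$. While the gap is~$1$, the argument of Lemma~\ref{lem:unimodal} applies: between two consecutive strict improvements the stagnation counter exceeds the threshold $2en\ln(nR)$ only with probability $O((nR)^{-2})$, and with $R\ge\card{\im(f)}=\mathrm{poly}(n)$ a union bound over the $\mathrm{poly}(n)$ improvements before reaching the local optimum bounds by $O(1/n)$ the probability that the strength ever leaves~$1$ during these phases. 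Hence the prefix/suffix race unfolds exactly as in the proof of Theorem~\ref{theo:plainooea-needhighmut} for $p=1/n$, so with probability $1-2^{-\Omega(n^{1/9})}$ the prefix is completed first and, in particular, the global optimum is not hit during the race. Once the prefix is complete and the suffix is not, the only accepted moves activate a suffix block (Hamming distance~$2$), so the gap is~$2$; the mechanism then raises the strength to~$2$, at which a block is activated within an expected $O(n^{3/2})$ steps, far below the threshold $\Theta(n^2\ln(nR))$, so the strength oscillates between~$1$ and~$2$, the suffix fills up, and (by a Chernoff/union bound over the $O(\sqrt n)$ blocks) the strength never exceeds~$2$ except with probability $2^{-\Omega(\sqrt n)}$. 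Escaping toward the global optimum requires $\Omega(n)$ simultaneous bit flips and is therefore impossible while the strength is at most~$2$. Summing the failure probabilities, the \sdooea reaches a local optimum in $\mathrm{poly}(n)$ steps with probability $1-O(1/n)$.

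For the second part, let $x^\ast$ be a local optimum; by the construction of \needhighmut one has $\gap(x^\ast)=(n-m)/10=\Omega(n)=:g$, and every point of fitness larger than $f(x^\ast)$ is valid, of which there are only $2^{O(\sqrt n\log n)}$ in total. At any strength $r\le n/2$ a single step reaches a given point at Hamming distance $d\ge g$ with probability $(r/n)^d(1-r/n)^{n-d}$, which is maximised over $d\ge g$ and $r\le n/2$ at $d=g$, $r=g$, giving $(g/n)^g(1-g/n)^{n-g}=2^{-\Omega(n)}$. Consequently the one-step probability of leaving $x^\ast$ toward \emph{any} improving point is at most $2^{O(\sqrt n\log n)}\cdot(g/n)^g(1-g/n)^{n-g}=2^{-\Omega(n)}$, uniformly over all strengths the mechanism may adopt and regardless of the lateral moves among equivalent local optima (which are accepted only at strength~$1$ and keep the current point a local optimum of gap~$g$). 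A union bound over $2^{cn}$ steps for a sufficiently small constant $c>0$ then bounds by $2^{-\Omega(n)}$ the probability that the \sdooea reaches the global optimum within its first $2^{cn}$ steps after arriving at $x^\ast$. Combining the two parts with a union bound yields the claim.

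The delicate, though essentially routine, work lies in the first part: one must check that none of the strength increases that stagnation detection may perform (the transient excursion to strength~$2$, and the spurious ones excluded by the union bounds) lets the algorithm deviate from the trajectory of the plain \ooea analysed in Theorem~\ref{theo:plainooea-needhighmut}. The second part is comparatively robust: because the gap at the local optimum is linear in~$n$, even the optimal mutation rate $g/n$ yields only an exponentially small escape probability per step, which is precisely why the bound weakens from the $n^{\Omega(n)}$ of the plain \ooea with $p=1/n$ to $2^{\Omega(n)}$ here, while remaining exponential.
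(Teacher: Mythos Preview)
Your proposal is correct and follows the same two-part structure as the paper: first show that with probability $1-O(1/n)$ the \sdooea gets trapped at the local optimum (by arguing that the strength stays low so the prefix/suffix race unfolds as in Theorem~\ref{theo:plainooea-needhighmut}), and then that escaping requires $2^{\Omega(n)}$ steps since the gap there is $\Omega(n)$. You are in fact more careful than the paper on one point: the paper asserts that ``the strength stays at~$1$ until the local optimum is reached'', but since the local optimum has full prefix \emph{and} full suffix, there is an intermediate sub-phase where the prefix is already complete and the gap becomes~$2$; you explicitly handle this by showing the strength then oscillates between~$1$ and~$2$ while the suffix fills, which the paper glosses over. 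Your escape analysis (bounding the number of improving valid points by $2^{O(\sqrt{n}\log n)}$ and the per-step hitting probability uniformly over all strengths $r\le n/2$) is also more explicit than the paper's one-line appeal to Theorem~\ref{theo:gapx}, though both reach the same $2^{\Omega(n)}$ conclusion.
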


\begin{proof}
We assume that the parameter $\card{R}$ of the algorithm is set to at least~$n$ and 
follow the analysis of the case $p=1/n$ from the proof of Theorem~\ref{theo:plainooea-needhighmut}. In a phase of 
$\frac{11}{10}e mn$ steps, there are at least $m$ $\pre$-improving mutations (having probability 
at least $1/(en)$ each) 
with probability $1-2^{-\Omega(n)}$ by Chernoff bounds. For each of these improving mutations, the probability 
that it does not happen within the threshold of $en\ln(n\card{R}) \ge en\ln(n^2)$ iterations is at most 
$(1-1/(en))^{en\ln(n^2)} \le 1/n^2$. By a union bound, the probability that at least one of the mutations does not happen 
within this number of iterations is at most~$1/n$. Together with the analysis of the number of $\suff$-increasing mutations, 
this means that the strength stays at~$1$ until the local optimum is reached, and that the local optimum is reached first, 
with probability at least 
$1-O(1/n)$.

Leaving the local optimum requires a mutation flipping at least $m/10=\Omega(n)$ bits simultaneously. As already analyzed in Theorem~\ref{theo:gapx}, 
even at optimal strength this requires $2^{\Omega(n)}$ steps with probability 
$1-2^{-\Omega(n)}$. Taking a union bound over all failure probabilities completes the proof.
\end{proof}
 
Finally, we also show that the self-adaptation scheme of the \saonelambdaea does not help to concentrate the  
mutation rate on the right regime for $\needhighmut_\xi$ if $\xi$ is a sufficiently large constant and $\lambda$ 
is not too large. This still applies 
in connection with stagnation detection.

\begin{theorem}
\label{theo:onelambda-fail-needhighmut}
Let $\xi$ be a sufficiently large constant and assume $\lambda=o(n)$ and $\lambda=\omega(1)$. Then 
with probability at least $1-O(1/n)$, the \saonelambdaea with stagnation detection (Algorithm~\ref{alg:sdonelambda}) 
needs at least $2^{\Omega(n)}/\lambda$ generations to optimize 
$\needhighmut_1$.
\end{theorem}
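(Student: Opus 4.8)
The plan is to transfer the argument from Theorem~\ref{theo:plainooea-needhighmut} and Theorem~\ref{theo:sdooea-fail}, the key point being that the self-adaptive two-rate mechanism of Algorithm~\ref{alg:sdonelambda} spends most of its time at a strength that still favours prefix over suffix progress, so the algorithm reaches the local optimum first with overwhelming probability. I would start by recalling, as in Theorem~\ref{theo:plainooea-needhighmut}, that the first valid search point reached has $\pre$- and $\suff$-values at most $n^{1/3}$ except with probability $2^{-\Omega(n^{1/3})}$, so only a $1-o(1)$ factor of the required improvements is lost; then condition on this and ignore the factor. The central quantity is the strength $r_t$ used by the algorithm over time: I would argue that before the local optimum is reached, $r_t$ stays below $c\xi$ for the constant $c$ of Theorem~\ref{theo:plainooea-needhighmut} with probability $1-O(1/n)$. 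For this, note that while $g=\textit{False}$ the algorithm is the self-adjusting \onelambdaea of \cite{DoerrGWYAlgorithmica19}: a strength increase to $r_t$ requires either a strict improvement created with strength close to~$r_t$, or a sequence of random doublings, each of probability $1/4$ per generation. Since \needhighmut offers plenty of $\pre$-improvements at small strength (probability $\Omega(1/n)$ each for a single offspring, hence $\Omega(\lambda/n)$ per generation), and since an improvement resets $u$ and (via the $\arg\min$/replacement rule) keeps the strength small, the drift of $r_t$ is downward while prefix progress is available. Formally I would couple $\log_2 r_t$ with a biased random walk that has a constant negative drift away from large values as long as improvements keep arriving, and use a Hajek-type or simple Chernoff/union-bound argument to show $r_t=O(1)$ throughout a phase of $\Theta(mn/\lambda)$ generations except with probability $2^{-\Omega(\lambda)}+O(1/n)=O(1/n)$ (using $\lambda=\omega(1)$).

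Given that the strength stays $O(1)$, I would essentially re-run the prefix-versus-suffix race of Theorem~\ref{theo:plainooea-needhighmut}, now with a population of size $\lambda$. In a phase of $\Theta(mn/\lambda)$ generations there are, with probability $1-2^{-\Omega(n)}$ by Chernoff bounds, at least $m$ generations in which some offspring improves the $\pre$-value (each offspring has probability $\Omega(1/n)$ of flipping the leftmost prefix zero while keeping validity, so a generation succeeds with probability $1-(1-\Omega(1/n))^\lambda = \Omega(\lambda/n)$, using $\lambda = o(n)$). Meanwhile the number of generations that activate one or more suffix blocks is bounded by a binomial distribution: a single offspring activates a given block with probability $O(n^{-3/2})$ at any strength $O(1)$, so per generation the probability of any suffix activation is $O(\lambda n^{-3/2})$, and over $\Theta(mn/\lambda)=\Theta(n^{7/4}/\lambda)$ generations the expected number of activations is $O(n^{1/4})$; mimicking the computation in Theorem~\ref{theo:plainooea-needhighmut}, with probability $1-2^{-\Omega(n^{1/9})}$ the total suffix increase in the phase stays below $\frac{2}{3}\xi\sqrt n = m/\lceil n^{1/4}\rceil$, hence below~$m$ blocks, so the prefix reaches $9(n-m)/10$ before the suffix reaches~$m$. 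For this to beat the prefix, it suffices that $\xi$ is a large enough constant: the ratio of expected suffix-to-prefix progress per generation is $\Theta(\xi)\cdot$(suffix activation rate)/(prefix improvement rate), and making $\xi$ large only enlarges the suffix target, not the per-step rates, so — exactly as in the $p=1/n$ analysis — prefix wins. Thus the algorithm reaches the local optimum first with probability $1-O(1/n)$.

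Finally, once at the local optimum, stagnation detection triggers ($g\gets\textit{True}$ after the threshold), but leaving the local optimum requires flipping $m/10 = \Omega(n^{3/4})$ \emph{specific} bits simultaneously. Actually here I should be careful: the escape is to flip $(n-m)/10 = \Omega(n)$ prefix bits, so $\gap$ of the local optimum is $\Omega(n)$. By Theorem~\ref{theo:gapx} (applied with $k=\Omega(n)$, so $m=\min\{k,n/2\}=\Omega(n)$), the expected number of iterations of the \sdooea-style inner loop to escape is $(en/m)^m(1-o(1)) = 2^{\Omega(n)}$, and by the same concentration argument as in Theorem~\ref{theo:sdooea-fail} this lower bound holds with probability $1-2^{-\Omega(n)}$; since each generation of Algorithm~\ref{alg:sdonelambda} uses $\lambda$ fitness evaluations, the number of generations is $2^{\Omega(n)}/\lambda$. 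A union bound over the $O(1/n)$ failure probability for reaching the local optimum first and the $2^{-\Omega(n)}$ failure probabilities for the strength-control and escape-time estimates gives the claimed $1-O(1/n)$. The main obstacle is the second paragraph's first half: showing rigorously that the self-adaptive strength stays $O(1)$ during the race, because the two-rate mechanism is genuinely stochastic and one must rule out a rare run of doublings that pushes $r_t$ up to the regime $\Theta(\xi)$ where suffix progress would dominate — this is where the condition that $\xi$ is a sufficiently large constant and the drift argument on $\log_2 r_t$ (fed by the steady supply of prefix improvements) do the real work.
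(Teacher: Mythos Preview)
Your overall architecture---show a downward drift of the strength, run the prefix/suffix race as in Theorem~\ref{theo:plainooea-needhighmut}, then invoke the escape bound from Theorem~\ref{theo:sdooea-fail}---matches the paper. However, two concrete steps in your drift analysis would not go through as written.

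\textbf{The claim ``$r_t=O(1)$ throughout the phase'' is false.} In State~2 the two-rate rule performs, with probability~$1/2$, a purely random halving/doubling, so with probability~$1/4$ per generation the strength doubles regardless of fitness. Over a phase of polynomial length this alone pushes $\log_2 r_t$ up by $\Theta(\log n)$ infinitely often; a biased random walk with constant negative drift still attains maxima of order~$\log T$ over~$T$ steps. Hence you cannot hope for a uniform $O(1)$ bound. The paper does not attempt this: it applies Hajek's \emph{occupation time} theorem (Theorem~\ref{theo:hajek-occ-times}) to $X_t=\log_2 r_t$ to conclude that the \emph{fraction} of generations with $r_t$ above a fixed constant~$b^*$ is at most~$1/10$ with probability $1-2^{-\Omega(T)}$, and then bounds the $\suff$-progress separately on the two sets of generations (those with $r_t\le b^*$ and those with $r_t>b^*$). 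Only with this split does the $\suff$-bound close, and only then does the choice of a sufficiently large~$\xi$ absorb the constant coming from the ``bad'' $T/10$ generations.

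\textbf{The source of the downward drift is misidentified.} You argue that the drift comes from strict improvements (``an improvement resets~$u$ and \ldots\ keeps the strength small''). But strict improvements occur with probability only $O(\lambda/n)=o(1)$ per generation, and in State~2 an improvement neither resets~$r_t$ to $r^{\text{init}}$ nor resets the strength via~$u$; it merely biases the replacement step. In the vast majority of generations there is no improvement, and there the drift must come from elsewhere. The paper obtains it from two regimes: for $r_t\le (\ln\lambda)/4$, \emph{copies of the parent} are produced with probability $\approx e^{-r_t/2}$ in the low-rate half versus $\approx e^{-2r_t}$ in the high-rate half, so the winning offspring (a tie) is with probability $1/2+\epsilon$ from the low-rate half; for $r_t\ge 4\ln\lambda$, with probability $1-o(1)$ all offspring are invalid, fitness becomes $-\onemax$, and the low-rate half flips fewer bits and hence wins. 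The intermediate range is bridged by a potential-function trick as in~\cite{DoerrGWYAlgorithmica19}. Without this case analysis you do not have the moment condition $\expect{e^{\eta\Delta_t}\mid X_t>a}\le\rho<1$ that Hajek's theorem requires.

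Fixing these two points essentially reproduces the paper's proof; the remainder of your plan (the race via Chernoff bounds and the $2^{\Omega(n)}/\lambda$ escape time) is correct.
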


The proof of this theorem uses more advanced techniques, more 
precisely Theorem~\ref{theo:hajek-occ-times} to analyze the 
distribution of mutation strength in the offspring over time. This 
technique allows us that only a small constant fraction of steps 
uses strength that are more beneficial for the suffix than the prefix.

\begin{proof}
The idea is to show that the strength has a drift towards its minimum and then apply Theorem~\ref{theo:hajek-occ-times} 
to bound the number of steps at which a mutation rate is taken that could be beneficial. Then, since most of the steps 
use small mutation rates, the prefix is optimized before the suffix with high probability and a local optimum reached.

To make these ideas precise, we pick up and extend the analysis of the acceptance and improvement probabilities from 
Theorem~\ref{theo:plainooea-needhighmut}. Hence (with respect to the creation of a single offspring):\begin{itemize}
\item The probability of accepting a mutation  at strength $r=o(n)$ 
is $(1\pm o(1)) e^{-r}$ since only $o(n)$ bits flip with probability $1-e^{-\omega(r)}$ and 
$(1-o(1))n$ bits have to be preserved (not flipped) with 
probability $1-2^{-\Omega(n)}$. At strengths $r=\Omega(n)$ the probability 
of improving the \pre-value by $m/2$ is $2^{-\Omega(n)}$ since $m/2$ consecutive 
bits have to be set to~$1$; otherwise, at least $m/2$ bits must preserved, 
which has probability at most $e^{-\Omega(r)}$. 
\item   the probability of improving the $\pre$-value by $k=o(n)$ is 
$(1\pm o(1)) (r/n)^k e^{-r}$.
\item  the probability of improving the $\suff$-value by $k=o(n)$ is 
$(1\pm o(1)) (r/n)^k e^{-r}$.
\end{itemize}
Clearly, the probability that at least one out of $\lambda$ offspring is improving the function value is at most $\lambda$ times 
as large. 
Since we have $\lambda=o(n)$ offspring and each improvement has probability $p_i=O(1/n)$, the probability at having at least one 
improving offspring is at least $1-(1-p_i)^{\lambda} = 1-(1-(1-o(1))\lambda p_i$, hence also by a factor at least $(1-o(1))\lambda$ larger.

Using these bounds on the acceptance and improvement probabilities, we now use 
ideas similar to the analysis of the near region in \cite{DoerrGWYAlgorithmica19} to show 
a drift of the strength towards small values. We discuss several cases:

{$r_t\le (\ln\lambda)/4$}: then the probability of creating a copy of the parent at strength~$r_t/2$ is 
at least $(1-o(1)) e^{-(\ln \lambda)/8} = (1-o(1))\lambda^{-1/8}$. This probability is 
by a factor $(1-o(1)) e^{4}$ smaller at strength $2r_t$. Using Chernoff bounds and exploiting $\lambda=\omega(1)$ we have that 
with probability~$1-o(1)$,  
the number of copies produced at strength $r_t/2$ is by a constant factor larger than the one produced 
at strength~$2r_t$, and there is at least one copy produced from strength~$r_t/2$. Due to the uniform choice of the 
individual adjusting the strength in case of ties, the probability of increasing the strength is at most $1/2-\epsilon$ for some constant~$\epsilon>0$. 

{$r_t\ge 4\ln\lambda$}:
Then with probability $1-o(1)$, all offspring are invalid in prefix or suffix and therefore 
worse than the parent. The fitness function is $-n-1+\onemax$ in this case. Now, since the minimum number 
of bits flipped at strength~$2r_t$ is with probability $1-o(1)$ larger than the maximum number of bits flipped 
at strength $r_t/2$ (using Chernoff and union bounds), with probability $1-o(1)$ an offspring produced from strength
$r_t/2$ has best fitness and adjusts the strength. Hence, the probability of increasing the strength is at most $1/2-\epsilon$ again.

{$L\coloneqq (\ln \lambda)/4 \le r_t\le 4\ln \lambda \eqqcolon U$}: here we only know that the probability of decreasing the strength is 
at least~$1/4$ due to the random steps of the \saonelambdaea. However, a constant number of such decreasing 
steps is enough to reach strength at most~$L$ from the smallest possible strength above~$U$. Using a potential function with 
an exponential slope in the range $[L,U]$ like in \cite{DoerrGWYAlgorithmica19}, we arrive at a process that increases with probability at most $1/2-\epsilon$ 
and decreases with the remaining probability. We choose a constant~$\epsilon>0$ that is sufficiently small to cover all three cases.

We note that the probability of decreasing the strength is at least $1/2+\epsilon$ except for the case $r_t=2$, where the strength stays the same with probability 
at least $1/2+\epsilon$. Hence, for the process $X_t\coloneqq \log_2(r_t)$ that lives on the non-negative integers we obtain, writing 
$\Delta_t\coloneqq X_{t+1}-X_t$, that 
\[
\expect{e^{\eta \Delta_t}\mid \filtt; X_t>2} = e^{-\eta} \left(\frac{1}{2}+\epsilon\right) + e^{\eta} \left(\frac{1}{2}-\epsilon\right) \le 
1-2\eta\epsilon + \eta^2  \le \rho
\]
for a constant~$\rho<1$ if $\eta$ is chosen as a sufficiently small constant (depending on the constant~$\epsilon$). Similarly, given 
this choice of $\eta$, we immediately have 
\[
\expect{e^{\eta \Delta_t}\mid \filtt; X_t\le 2} \le D 
\]
for a constant $D>0$. If we choose $b$ in Theorem~\ref{theo:hajek-occ-times} as a sufficiently large constant, we obtain, noting  
$a=2$,  
\[
1-\epsilon - \frac{1-\epsilon}{1-\rho}De^{\eta(a-b)} \ge \frac{9}{10}
\]
Hence, the theorem states that in a phase of length $T$, the number of generations where $X_t>b$ holds, is at most 
$T/10$ with probability $1-2^{-\Omega(T)}$. Let $b^*=2^b$, \ie, the strength corresponding to $X_t=b$. We set 
$T\coloneqq (12/10)e^{b^*} mn/(b^*\lambda)$. Since a $\pre$-improving mutation has probability at least $(1-o(1))\lambda(b^*/n)e^{-b^*}$, we have an expected number 
of at least $(1-o(1))(27/25)m$ such mutations in the phase at with probability $1-2^{-\Omega(n)}$ at least~$m$ 
such mutations by Chernoff bounds. This is sufficient to reach the local optimum unless there are at least $(2/3)\xi\sqrt{n}$ 
$\suff$-improving mutations in the phase. Note that the choice of the constant~$\xi$ only impacts the length 
of the prefix 
 in lower-order terms that vanish in $O$-notation.

We bound the number of $\suff$-improving mutations separately for the points in time (\ie, generations) where $X_t\le b$ and where $X_t>b$.
For the first set of time points, we note that the probability of a $\suff$-improving mutation by~$k\ge 1$ is at most 
$(1+o(1)) \lambda (2/n^{3/2})^k e^{-2}$ since the term $x^2/e^{-x}$ takes its maximum at $x=2$. Using similar arguments based on Chernoff and union bounds 
as in the 
proof of Theorem~\ref{theo:plainooea-needhighmut}, we bound the total improvement of the $\suff$-value in at most $T \le (12/10)e^{b^*} n^2/(\lambda b^*)$ 
steps where $X_t\le b$ by $i_1\coloneqq (25/10) e^{b^*-2} \sqrt{n}/b^*$ with probability $1-2^{-\Omega(n^{1/9})}$. For the points of 
time where $X_t>b$ the probability of a $\pre$-improving mutation is maximized (up to lower-order terms) at strength~$b^*$ 
since the function $x^2/e^{-x}$ is monotonically decreasing for $x>2$. Assuming at most 
$(12/100)e^{b^*} n^2/b^*$ such time points (which assumption holds with probability at least $1-2^{-\Omega(n^2)}$, we obtain 
an expected number of $\suff$-improving mutations by~$1$ of at most 
\[
\frac{12}{100} e^{b^*} \frac{n^2}{b^*}  \frac{b^*}{n^{3/2}} e^{-b^*} =  \frac{12}{100} \sqrt{n}
\]
and using Chernoff and union bounds we bound the total improvement of the $\suff$-value in these generations  
by $i_2=(13/100)\sqrt{n}$ with $1-2^{-\Omega(n^{1/9})}$. Now, if we choose $\xi$ large enough, then
\[
i_1 + i_2 \le \frac{2}{3}\xi\sqrt{n}
\]
so that the prefix is optimized before the suffix with probability altogether $1-2^{-\Omega(n^{1/3})}$. 

Together with the analysis in Theorem~\ref{theo:sdooea-fail} for the case that the stagnation counter exceeds its threshold,
 this means that with probability $1-O(1/n)$ the local optimum is reached before the global one. Again arguing 
in the same way as in the proof of Theorem~\ref{theo:sdooea-fail}, the time to reach the global optimum 
from the local one is $2^{\Omega(n)}/\lambda$ with probability $1-2^{-\Omega(n)}$. The sum of all failure probabilities is $O(1/n)$.
\end{proof}

\section{Experiments}
\label{sec:experiments}
Our theoretical results are asymptotic. 
In this section, we show the results of the experiments\footnote{\url{https://github.com/DTUComputeTONIA/StagnationDetection}.} we did in order to see how the different algorithms perform in practice for small $n$.
	
	In the first experiment, we ran an implementation of Algorithms~\ref{alg:sdoneone} (\sdooea) and~\ref{alg:sdonelambda} (\saonelambdaea) on the \jump fitness function with jump size $m=4$ and $n$ varying from 40 to 160. We compared our algorithms against \ooea with standard mutation rate 1/n, \ooea with mutation probability $m/n$, and Algorithm \oofea from \cite{DoerrLMNGECCO17}
	with three different $\beta=\{1.5, 2, 4\}$.
	
\begin{figure} 
    \centering
	\includegraphics[width=\linewidth/2]{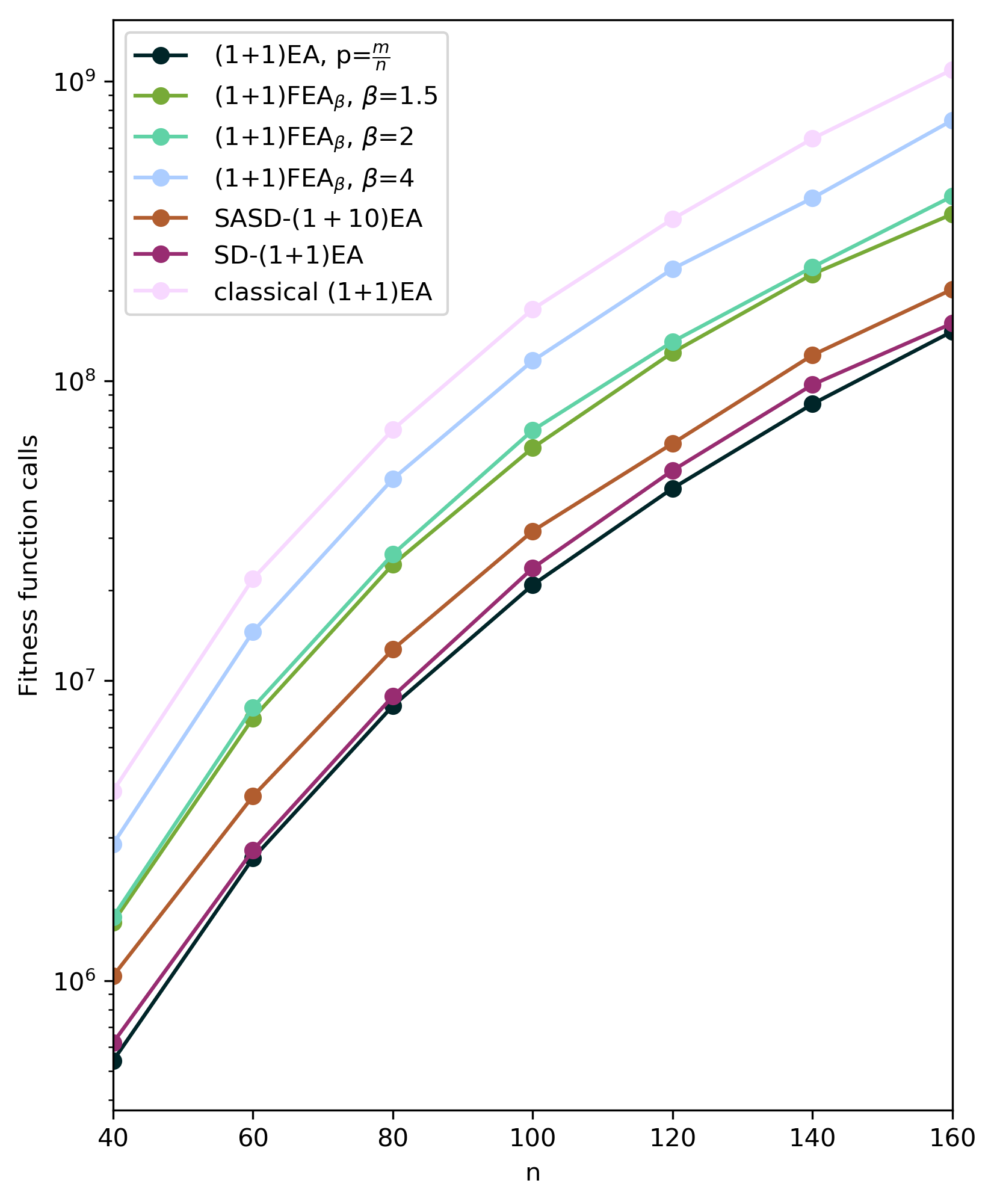}
	\caption{Average number of fitness calls (over 1000 runs) the mentioned algorithms take to optimize $\jump_4$.}\label{fig:exp_jump_line}
\end{figure}

	\begin{figure} 
		\includegraphics[width=\linewidth]{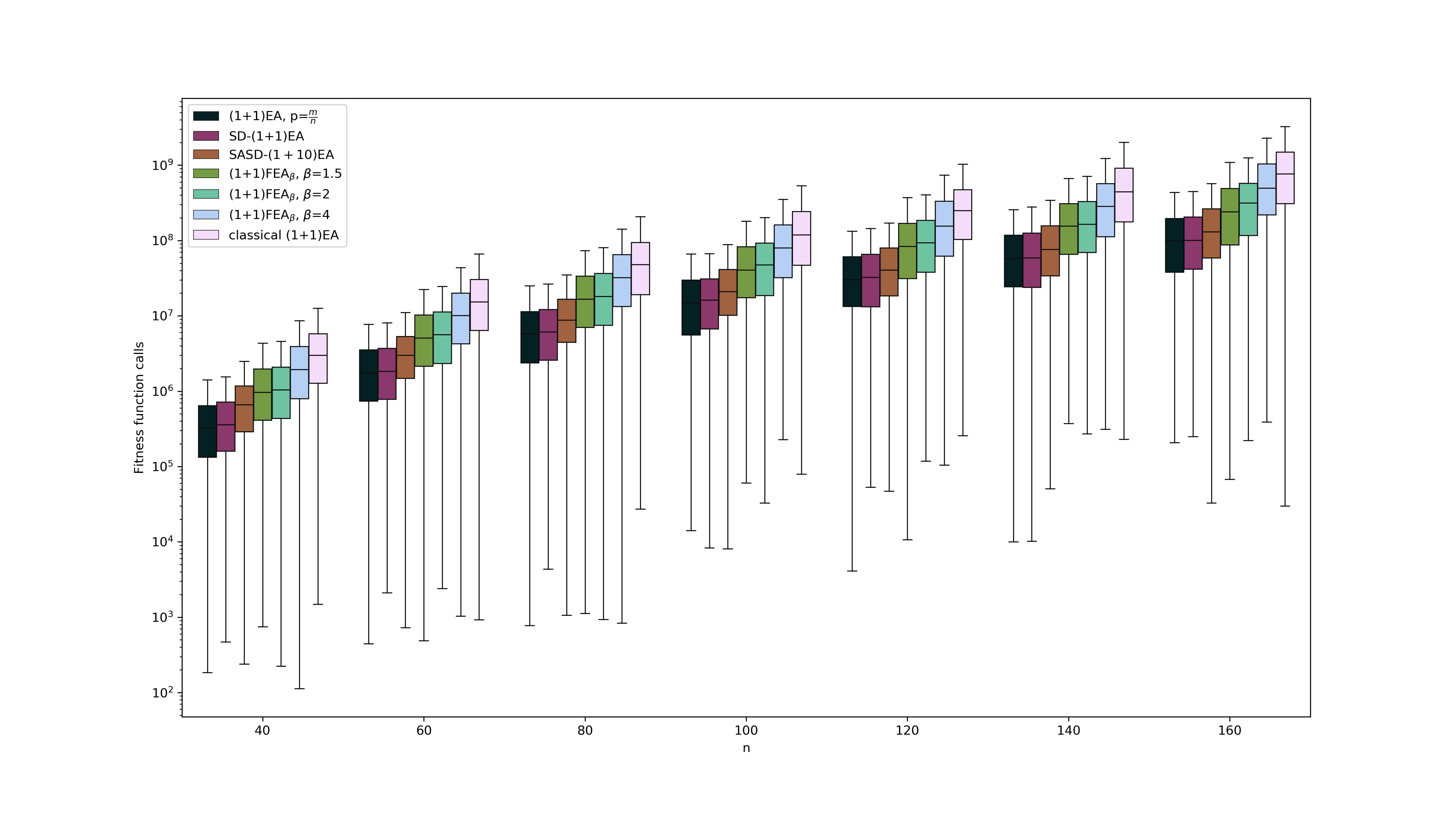}
		\caption{Box plots comparing number of fitness calls (over 1000 runs) the mentioned algorithms take to optimize $\jump_4$.} \label{fig:exp_jump_boxplot}
	\end{figure}

	In Figures~\ref{fig:exp_jump_line} and more precisely \ref{fig:exp_jump_boxplot}, we observe that stagnation detection technique makes the algorithm faster than the algorithms with heavy-tailed mutation operator \oofea. Also, Algorithm \sdooea is not much slower than the \ooea with mutation probability 
	$\frac mn$ even though it does not need the gap size.

\begin{table}[htb!]
\small
	\begin{tabular}{|l|l|l|l|l|l|l|}
\hline
\diagbox[width=\dimexpr \textwidth/8+2\tabcolsep\relax, height=1cm]{ n }{Algo.} & \begin{tabular}[c]{@{}l@{}}(1+1)EA \\ with p=$\frac 1n$\end{tabular}&\begin{tabular}[c]{@{}l@{}}(1+1)EA \\ with p=$\frac 2n$\end{tabular}& \begin{tabular}[c]{@{}l@{}}(1+1)EA \\ with p=$\frac 6n$\end{tabular} & \begin{tabular}[c]{@{}l@{}}(1+1)EA \\ with p=$\frac 8n$\end{tabular} & \begin{tabular}[c]{@{}l@{}}\small{SD-} \\ \small{($1+1$)EA}\end{tabular} &  \begin{tabular}[c]{@{}l@{}}\small{SASD-} \\ \small{($1+\ln n$)EA}\end{tabular} \\ \hline
200 &0.00000 &0.00000&0.01181&0.19380&0.00000&0.00000   \\ \hline
400 &0.00000 &0.00000&0.33858&0.87402&0.00100&0.00000   \\ \hline
600 &0.00000 &0.00000&0.42449&0.85950&0.00051&0.00000   \\ \hline
800 &0.00000 &0.00000&0.84000&0.97273&0.00056&0.00229   \\ \hline
1000&0.00000&0.00000&0.80769&0.97917&0.00058&0.00121   \\ \hline
	\end{tabular}
	
\caption{Ratio of successfully achieved global optimum where $\xi=3$ over 1000 runs.} \label{table:needhighmut3}
\end{table}
	In the second  experiment, we ran our algorithms and the classic \ooea with different mutation probabilities 
	on $\needhighmut_\xi$ with $n=\{200,400,600,800, 1000\}$ and $\xi=3$.
	
	The outcomes support that the theory from Section~\ref{sec:needhighmut} already holds for small~$n$. 
	In Table~\ref{table:needhighmut3}, one can see that for $\xi=3$, the  \ooea with $p=6/n$ and $8/n$ is much
	more successful to find global optimum points than the rest of the algorithms.


\section*{Conclusions}
We have designed and analyzed self-adjusting EAs for multimodal optimization. In particular, we have proposed a module 
called \emph{stagnation detection} that can be added to existing EAs without essentially changing their behavior 
on unimodal (sub)problems. Our stagnation detection keeps track of the number of unsuccessful steps and increases 
the mutation rate based on statistically significant waiting times without improvement. Hence, there is high evidence for 
being at a local optimum when the strength is increased.

Theoretical analyses  reveal that the \oneoneea equipped with stagnation detection optimizes the \jump function 
in asymptotically optimal time corresponding to the best static choice of the mutation rate. Moreover, we have proved 
a general upper bound for multimodal functions that can recover asymptotically runtimes on well-known example functions, 
and we have shown that on unimodal functions, the  \oneoneea with stagnation detection with high probability never deviates 
from the classical \oneoneea; also a related statement was proved for the self-adjusting \onelambdaea from \cite{DoerrGWYAlgorithmica19}. Finally, to show 
the limitations of the approach we have presented a function on which all of our investigated self-adjusting EAs provably fail to be efficient.

In the future, we would like to investigate our module for stagnation detection in other EAs and study its benefits 
on combinatorial optimization problems.

\section*{Acknowledgement}
	This work was supported  by a grant by the Danish Council for Independent Research  (DFF-FNU  8021-00260B).

\bibliographystyle{mynatbib_english}

\bibliography{references}

\begin{thebibliography}{37}
\expandafter\ifx\csname natexlab\endcsname\relax\def\natexlab#1{#1}\fi
\expandafter\ifx\csname url\endcsname\relax
  \def\url#1{\texttt{#1}}\fi
\expandafter\ifx\csname urlprefix\endcsname\relax\def\urlprefix{URL }\fi

\bibitem[{Antipov, Doerr and Karavaev(2019)Antipov, Doerr, and
  Karavaev}]{AntipovDKFOGA19}
Antipov, Denis, Doerr, Benjamin, and Karavaev, Vitalii (2019).
\newblock A tight runtime analysis for the {(1} + ({\(\lambda\)},
  {\(\lambda\)})) {GA} on {LeadingOnes}.
\newblock In \emph{Proc. of FOGA '19}, 169--182. ACM Press.

\bibitem[{Antipov, Doerr and Karavaev(2020)Antipov, Doerr, and
  Karavaev}]{AntipovDoerrKaravaev2020}
Antipov, Denis, Doerr, Benjamin, and Karavaev, Vitalii (2020).
\newblock The $(1 + (\lambda, \lambda))$ {GA} is even faster on multimodal
  problems.
\newblock \emph{CoRR}, \textbf{abs/2004.06702}.
\newblock \urlprefix\url{http://arxiv.org/abs/2004.06702}.

\bibitem[{Buzdalov, Doerr and Kever(2016)Buzdalov, Doerr, and
  Kever}]{BuzdalovDoerrKeverECJ16}
Buzdalov, Maxim, Doerr, Benjamin, and Kever, Mikhail (2016).
\newblock The unrestricted black-box complexity of jump functions.
\newblock \emph{Evolutionary Computation}, \textbf{24}(4), 719--744.

\bibitem[{Corus, Oliveto and Yazdani(2018)Corus, Oliveto, and
  Yazdani}]{CorusEtAlFastAISPPSN18}
Corus, Dogan, Oliveto, Pietro~Simone, and Yazdani, Donya (2018).
\newblock Fast artificial immune systems.
\newblock In \emph{Proc. of PPSN '18}, 67--78. Springer.

\bibitem[{Dang and Lehre(2016)}]{DangL16}
Dang, Duc-Cuong and Lehre, Per~Kristian (2016).
\newblock Self-adaptation of mutation rates in non-elitist populations.
\newblock In \emph{Proc. of PPSN '16}, 803--813. Springer.

\bibitem[{Doerr(2019)}]{DoerrGECCO19}
Doerr, Benjamin (2019).
\newblock A tight runtime analysis for the {cGA} on jump functions: {EDAs} can
  cross fitness valleys at no extra cost.
\newblock In \emph{Proc. of GECCO '19}, 1488--1496. {ACM} Press.

\bibitem[{Doerr and Doerr(2018)}]{DoerrDoerrAlgorithmica18}
Doerr, Benjamin and Doerr, Carola (2018).
\newblock Optimal static and self-adjusting parameter choices for the
  (1+({\(\lambda\)}, {\(\lambda\)})) genetic algorithm.
\newblock \emph{Algorithmica}, \textbf{80}(5), 1658--1709.

\bibitem[{Doerr and Doerr(2020)}]{DoerrDoerrParameterBookChapter}
Doerr, Benjamin and Doerr, Carola (2020).
\newblock Theory of parameter control for discrete black-box optimization:
  Provable performance gains through dynamic parameter choices.
\newblock In Doerr, B. and Neumann, F. (eds.), \emph{Theory of Evolutionary
  Computation -- Recent Developments in Discrete Optimization}, 271--321.
  Springer.

\bibitem[{Doerr, Doerr and K{\"{o}}tzing(2018)Doerr, Doerr, and
  K{\"{o}}tzing}]{DoerrDoerrKoetzingAlgorithmica18}
Doerr, Benjamin, Doerr, Carola, and K{\"{o}}tzing, Timo (2018).
\newblock Static and self-adjusting mutation strengths for multi-valued
  decision variables.
\newblock \emph{Algorithmica}, \textbf{80}(5), 1732--1768.

\bibitem[{Doerr, Fouz and Witt(2010)Doerr, Fouz, and
  Witt}]{DoerrFouzWittGECCO10}
Doerr, Benjamin, Fouz, Mahmoud, and Witt, Carsten (2010).
\newblock Quasirandom evolutionary algorithms.
\newblock In \emph{Proc.\ of GECCO~'10}, 1457--1464. ACM Press.

\bibitem[{Doerr et~al.(2019)Doerr, Gie{\ss}en, Witt, and
  Yang}]{DoerrGWYAlgorithmica19}
Doerr, Benjamin, Gie{\ss}en, Christian, Witt, Carsten, and Yang, Jing (2019).
\newblock The (1 + $\lambda$)~evolutionary algorithm with self-adjusting
  mutation rate.
\newblock \emph{Algorithmica}, \textbf{81}(2), 593--631.

\bibitem[{Doerr and Krejca(2018)}]{DoerrKrejcaGECCO18}
Doerr, Benjamin and Krejca, Martin~S. (2018).
\newblock Significance-based estimation-of-distribution algorithms.
\newblock In \emph{Proc. of {GECCO} '18}, 1483--1490. {ACM} Press.

\bibitem[{Doerr et~al.(2017)Doerr, Le, Makhmara, and Nguyen}]{DoerrLMNGECCO17}
Doerr, Benjamin, Le, Huu~Phuoc, Makhmara, R{\'{e}}gis, and Nguyen, Ta~Duy
  (2017).
\newblock Fast genetic algorithms.
\newblock In \emph{Proc. of {GECCO} '17}, 777--784. ACM Press.

\bibitem[{Doerr, Witt and Yang(2018)Doerr, Witt, and Yang}]{DoerrWY18}
Doerr, Benjamin, Witt, Carsten, and Yang, Jing (2018).
\newblock Runtime analysis for self-adaptive mutation rates.
\newblock In \emph{Proc. of GECCO '18}, 1475--1482. ACM Press.

\bibitem[{Doerr and Wagner(2018)}]{DoerrWagnerPPSN18}
Doerr, Carola and Wagner, Markus (2018).
\newblock Sensitivity of parameter control mechanisms with respect to their
  initialization.
\newblock In \emph{Proc. of {PPSN} '18}, 360--372. Springer.

\bibitem[{Doerr et~al.(2018)Doerr, Ye, van Rijn, Wang, and
  B\"{a}ck}]{DoerrEtAlTheoryGuidedBenchmark18}
Doerr, Carola, Ye, Furong, van Rijn, Sander, Wang, Hao, and B\"{a}ck, Thomas
  (2018).
\newblock Towards a theory-guided benchmarking suite for discrete black-box
  optimization heuristics: Profiling (1+\(\lambda\)) {EA} variants on {OneMax}
  and {LeadingOnes}.
\newblock In \emph{Proc. of GECCO '18}, 951–958. ACM Press.

\bibitem[{Droste, Jansen and Wegener(2002)Droste, Jansen, and
  Wegener}]{DrosteJW02}
Droste, Stefan, Jansen, Thomas, and Wegener, Ingo (2002).
\newblock On the analysis of the (1+1) evolutionary algorithm.
\newblock \emph{Theoretical Computer Science}, \textbf{276}, 51--81.

\bibitem[{Eiben, Marchiori and Valk{\'{o}}(2004)Eiben, Marchiori, and
  Valk{\'{o}}}]{EibenMVPPSN04}
Eiben, A.~E., Marchiori, Elena, and Valk{\'{o}}, V.~A. (2004).
\newblock Evolutionary algorithms with on-the-fly population size adjustment.
\newblock In \emph{Proc. of PPSN '04}, 41--50. Springer.

\bibitem[{Fajardo(2019)}]{FajardoGECCO19}
Fajardo, Mario A.~Hevia (2019).
\newblock An empirical evaluation of success-based parameter control mechanisms
  for evolutionary algorithms.
\newblock In \emph{Proc. of GECCO ’19}, 787--795. ACM Press.

\bibitem[{Friedrich, Quinzan and Wagner(2018)Friedrich, Quinzan, and
  Wagner}]{FriedrichQWGECCO18}
Friedrich, Tobias, Quinzan, Francesco, and Wagner, Markus (2018).
\newblock Escaping large deceptive basins of attraction with heavy-tailed
  mutation operators.
\newblock In \emph{Proc. of GECCO '18}, 293--300. ACM Press.

\bibitem[{Hajek(1982)}]{HajekDrift}
Hajek, Bruce (1982).
\newblock Hitting and occupation time bounds implied by drift analysis with
  applications.
\newblock \emph{Advances in Applied Probability}, \textbf{14}, 502--525.

\bibitem[{Hansen and Mladenovic(2018)}]{HansenMladenovic18}
Hansen, Pierre and Mladenovic, Nenad (2018).
\newblock Variable neighborhood search.
\newblock In Mart{\'{\i}}, Rafael, Pardalos, Panos~M., and Resende, Mauricio
  G.~C. (eds.), \emph{Handbook of Heuristics}, 759--787. Springer.

\bibitem[{Jansen, Jong and Wegener(2005)Jansen, Jong, and Wegener}]{JansenDW05}
Jansen, Thomas, Jong, Kenneth A.~De, and Wegener, Ingo (2005).
\newblock On the choice of the offspring population size in evolutionary
  algorithms.
\newblock \emph{Evolutionary Computation}, \textbf{13}, 413--440.

\bibitem[{Jansen and Wiegand(2004)}]{JansenWiegandECJ04}
Jansen, Thomas and Wiegand, R.~Paul (2004).
\newblock The cooperative coevolutionary {(1+1)} {EA}.
\newblock \emph{Evolutionary Computation}, \textbf{12}(4), 405--434.

\bibitem[{L{\"{a}}ssig and Sudholt(2011)}]{LassigSudholtFOGA11}
L{\"{a}}ssig, J{\"{o}}rg and Sudholt, Dirk (2011).
\newblock Adaptive population models for offspring populations and parallel
  evolutionary algorithms.
\newblock In \emph{Proc. of {FOGA} '11}, 181--192. {ACM} Press.

\bibitem[{Lengler(2018)}]{LenglerPPSN18}
Lengler, Johannes (2018).
\newblock A general dichotomy of evolutionary algorithms on monotone functions.
\newblock In \emph{Proc. of PPSN '18}, 3--15. Springer.

\bibitem[{Lissovoi, Oliveto and Warwicker(2020)Lissovoi, Oliveto, and
  Warwicker}]{LOWECJ19}
Lissovoi, Andrei, Oliveto, Pietro~S., and Warwicker, John~Alasdair (2020).
\newblock Simple hyper-heuristics control the neighbourhood size of randomised
  local search optimally for leadingones.
\newblock \emph{Evolutionary Computation}.
\newblock In print.

\bibitem[{Rodionova et~al.(2019)Rodionova, Antonov, Buzdalova, and
  Doerr}]{RodionovaABDGECCO19}
Rodionova, Anna, Antonov, Kirill, Buzdalova, Arina, and Doerr, Carola (2019).
\newblock Offspring population size matters when comparing evolutionary
  algorithms with self-adjusting mutation rates.
\newblock In \emph{Proc. of {GECCO} '19}, 855--863. ACM Press.

\bibitem[{Rohlfshagen, Lehre and Yao(2009)Rohlfshagen, Lehre, and
  Yao}]{RohlfshagenLehreYaoGECCO09}
Rohlfshagen, Philipp, Lehre, Per~Kristian, and Yao, Xin (2009).
\newblock Dynamic evolutionary optimisation: an analysis of frequency and
  magnitude of change.
\newblock In \emph{Proc. of {GECCO} '09}, 1713--1720. ACM Press.

\bibitem[{Rowe and Aishwaryaprajna(2019)}]{RoweAFOGA19}
Rowe, Jonathan~E. and Aishwaryaprajna (2019).
\newblock The benefits and limitations of voting mechanisms in evolutionary
  optimisation.
\newblock In \emph{Proc. of {FOGA} '19}, 34--42. {ACM} Press.

\bibitem[{Wegener(2001)}]{WegenerMethods}
Wegener, Ingo (2001).
\newblock Methods for the analysis of evolutionary algorithms on
  pseudo-{B}oolean functions.
\newblock In Sarker, Ruhul, Mohammadian, Masoud, and Yao, Xin (eds.),
  \emph{Evolutionary Optimization}. Kluwer Academic Publishers.

\bibitem[{Whitley et~al.(2018)Whitley, Varadarajan, Hirsch, and
  Mukhopadhyay}]{WhitleyVHM18}
Whitley, Darrell, Varadarajan, Swetha, Hirsch, Rachel, and Mukhopadhyay,
  Anirban (2018).
\newblock Exploration and exploitation without mutation: Solving the jump
  function in $\vartheta(n)$ time.
\newblock In \emph{Proc. of {PPSN} '18}, 55--66. Springer.

\bibitem[{Witt(2003)}]{WittCEC03}
Witt, Carsten (2003).
\newblock Population size vs. runtime of a simple {EA}.
\newblock In \emph{Proc.\ of the Congress on Evolutionary Computation
  (CEC~2003)}, vol.~3, 1996--2003. IEEE Press.

\bibitem[{Witt(2006)}]{WittECJ06}
Witt, Carsten (2006).
\newblock Runtime analysis of the ($\mu$+1)~{EA} on simple pseudo-boolean
  functions.
\newblock \emph{Evolutionary Computation}, \textbf{14}(1), 65--86.

\bibitem[{Witt(2008)}]{WittTCS08}
Witt, Carsten (2008).
\newblock Population size versus runtime of a simple evolutionary algorithm.
\newblock \emph{Theoretical Computer Science}, \textbf{403}(1), 104--120.

\bibitem[{Witt(2013)}]{WittCPC2013}
Witt, Carsten (2013).
\newblock Tight bounds on the optimization time of a randomized search
  heuristic on linear functions.
\newblock \emph{Combinatorics, Probability and Computing}, \textbf{22},
  294--318.

\bibitem[{Ye, Doerr and B{\"{a}}ck(2019)Ye, Doerr, and
  B{\"{a}}ck}]{YeDoerrBaeckCEC19}
Ye, Furong, Doerr, Carola, and B{\"{a}}ck, Thomas (2019).
\newblock Interpolating local and global search by controlling the variance of
  standard bit mutation.
\newblock In \emph{Proc. of CEC '19}, 2292--2299.

\end{thebibliography}
\end{document}